\newcolumntype{P}[1]{>{\centering\arraybackslash}p{#1}}
\newcommand{\tbf}[1]{\mathbf{#1}}
\theoremstyle{definition}
\theoremstyle{remark}
\theoremstyle{plain}
\newtheorem{theorem}{Theorem}[section]
\theoremstyle{plain}
\theoremstyle{plain}
\newtheorem{proposition}[theorem]{Proposition}
\newtheorem{lemma}[theorem]{Lemma}
\def\eqref#1{(\ref{#1})}
\def\1{\bm{1}}
\def\mbtheta{\mathbf{\theta}}
\def\rmI{{\mathbf{I}}}
\def\rmV{{\mathbf{V}}}
\def\rmW{{\mathbf{W}}}
\def\rmY{{\mathbf{Y}}}
\def\rmZ{{\mathbf{Z}}}
\def\vzero{{\bm{0}}}
\def\vtheta{{\bm{\theta}}}
\def\va{{\bm{a}}}
\def\vb{{\bm{b}}}
\def\vh{{\bm{h}}}
\def\vk{{\bm{k}}}
\def\vp{{\bm{p}}}
\def\vq{{\bm{q}}}
\def\vs{{\bm{s}}}
\def\vu{{\bm{u}}}
\def\vv{{\bm{v}}}
\def\vx{{\bm{x}}}
\def\vy{{\bm{y}}}
\def\vz{{\bm{z}}}
\def\mK{{\bm{K}}}
\def\mV{{\bm{V}}}
\def\mW{{\bm{W}}}
\def\mX{{\bm{X}}}
\def\mY{{\bm{Y}}}
\def\mZ{{\bm{Z}}}
\DeclareMathAlphabet{\mathsfit}{\encodingdefault}{\sfdefault}{m}{sl}
\SetMathAlphabet{\mathsfit}{bold}{\encodingdefault}{\sfdefault}{bx}{n}
\def\sN{{\mathbb{N}}}
\def\sQ{{\mathbb{Q}}}
\newcommand{\bt}{\bar{t}}
\newcommand{\minn}{\overline}
\newcommand{\bn}{\mathbb{N}}
\newcommand{\bz}{\mathbb{Z}}
\newcommand{\bq}{\mathbb{Q}}
\newcommand{\mx}{\mathbf{X}}
\newcommand{\mk}{\mathbf{K}}
\newcommand{\mv}{\mathbf{V}}
\newcommand{\startsym}{\ensuremath \#}
\newcommand{\termsym}{\ensuremath \$}
\newcommand{\prop}{\bm{\omega}}
\newcommand{\diff}{\bm{\delta}}
\newcommand{\sigmoid}{\sigma}
\newcommand{\gam}{\bm{\gamma}}
\newcommand{\ta}{\bm{\tau}}
\newcommand{\Gam}{\bm{\Gamma}}
\newcommand{\stack}{\Psi}
\newcommand{\stacka}{\Psi^{(1)}}
\newcommand{\stackb}{\Psi^{(2)}}
\newcommand{\cantor}{\mathcal{C}_4}
\newcommand{\brac}[1]{\ensuremath(#1)}
\newcommand{\bl}{\ensuremath (\ell)}
\newcommand{\vzd}{\vzero_d}
\newcommand{\vzh}{\vzero_h}
\newcommand{\vzs}{\vzero_s}
\newcommand{\vzw}{\vzero_{\omega}}
\newcommand{\vke}{\vk^e}
\newcommand{\vve}{\vv^e}
\newcommand{\oh}[1]{\llbracket #1\rrbracket}
\newcommand{\freq}{\phi}
\newtheorem{claim}{Claim}
\newcommand{\Att}{\operatorname{Att}}
\newcommand{\Enc}{\operatorname{Enc}}
\newcommand{\TEnc}{\operatorname{TEnc}}
\newcommand{\Dec}{\operatorname{Dec}}
\newcommand{\TDec}{\operatorname{TDec}}
\newcommand{\Trans}{\operatorname{Trans}}
\newcommand{\DTrans}{\operatorname{DTrans}}
\newcommand{\pos}{\ensuremath\mathrm{pos}}
\newcommand{\RNN}{\ensuremath\mathrm{RNN}}
\newcommand{\<}{\ensuremath \langle}
\renewcommand{\>}{\ensuremath \rangle}
\newcommand{\N}{\sN}
\newcommand{\Q}{\sQ}
\newcommand{\att}{\mathrm{att}}
\newcommand{\mrsymb}{\mathrm{symb}}
\newcommand{\symb}{\beta}
\newcommand{\alphabetsize}{m} 
\newcommand{\fatt}{f^{\att}}
\newcommand{\hardmax}{\ensuremath{\mathsf{hardmax}}}
\newcommand{\softmax}{\ensuremath{\mathsf{softmax}}}
\newcommand{\mbQ}{\mathbb{Q}}
\newcommand{\mbN}{\mathbb{N}}
\newcommand{\indicator}{\mathbb{I}}
\newcommand{\btheta}{\bm{\theta}}
\newcommand{\mDelta}{\bm{\Delta}}
\newcommand{\BaseEmbedding}{f_b}
\title{On the Computational Power of Transformers and its \\ Implications in Sequence Modeling}
\author{Satwik Bhattamishra\quad Arkil Patel\quad Navin Goyal\\
	Microsoft Research India\\
	{\tt \small \{t-satbh,t-arkpat,navingo\}@microsoft.com} \\
}
\date{}
\begin{document}
\maketitle
\begin{abstract}

Transformers are being used extensively across several sequence modeling tasks. Significant research effort has been devoted to experimentally probe the inner workings of Transformers. However, our conceptual and theoretical understanding of their power and inherent limitations is still nascent. In particular, the roles of various components in Transformers such as positional encodings, attention heads, residual connections, and feedforward networks, are not clear. In this paper, we take a step towards answering these questions. We analyze the computational power as captured by Turing-completeness. We first provide an alternate and simpler proof to show that vanilla Transformers are Turing-complete and then we prove that Transformers with only positional masking and without any positional encoding are also Turing-complete. We further analyze the necessity of each component for the Turing-completeness of the network; interestingly, we find that a particular type of residual connection is necessary. We demonstrate the practical implications of our results via experiments on machine translation and synthetic tasks. 

\end{abstract}



\section{Introduction}
Transformer \cite{vaswani2017attention} is a recent self-attention based sequence-to-sequence architecture which has led to state of the art results across various NLP tasks including machine translation \cite{ott-etal-2018-scaling}, language modeling \cite{radford2018improving} and question answering \cite{devlin-etal-2019-bert}. Although a number of variants of Transformers have been proposed, the original architecture  still underlies these variants.


While the training and generalization of machine learning models such as Transformers are the central goals in their analysis, an essential prerequisite to this end is characterization of the computational power of the model: training a model for a certain task cannot succeed if the model is computationally incapable of carrying out the task. While the computational capabilities of recurrent networks (RNNs) have been studied for decades \cite{kolen2001field,siegelmann2012neural},
 for Transformers we are still in the early stages. 

\begin{figure}[t]
	\centering
	\includegraphics[scale=0.28]{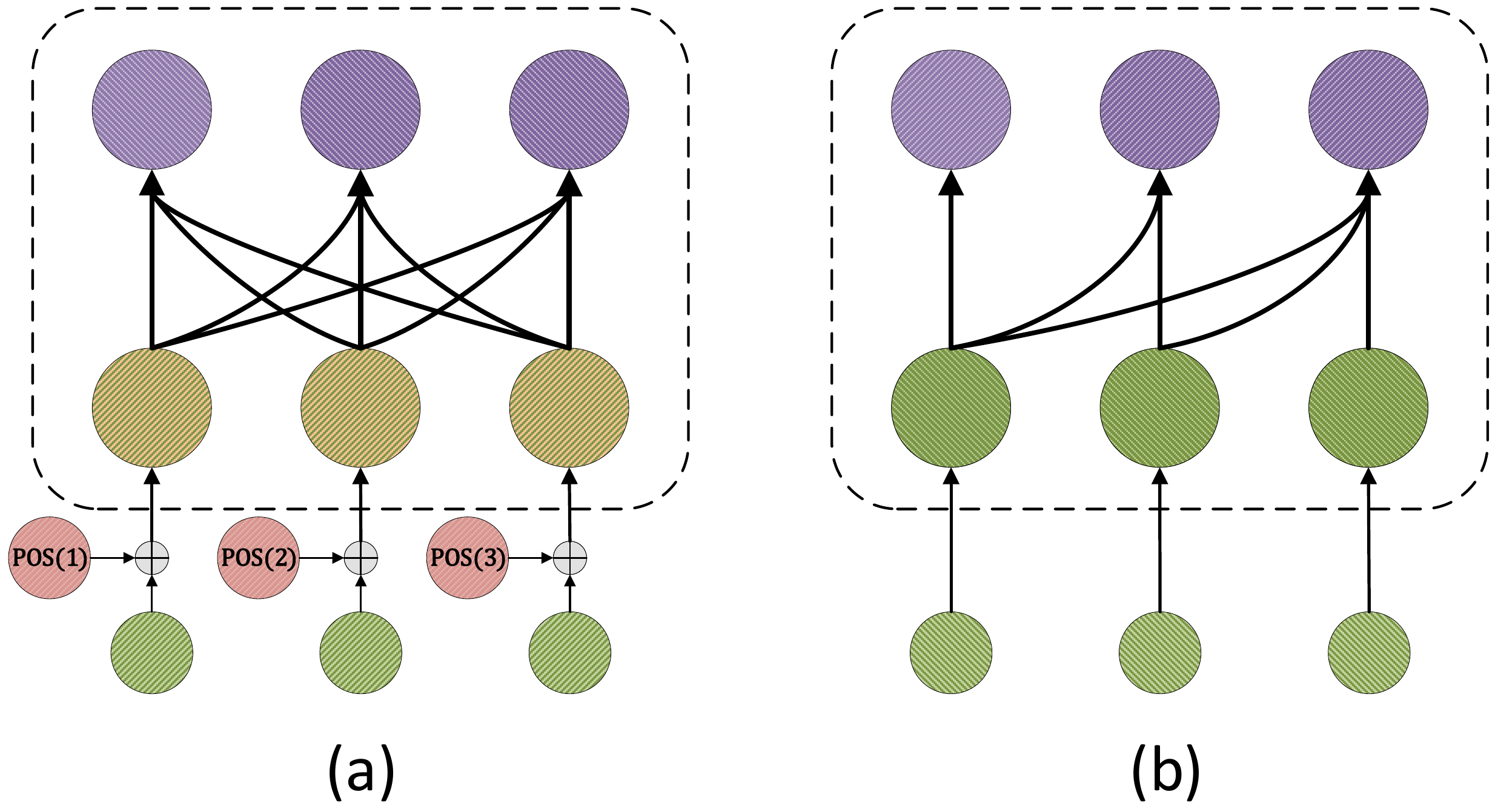}
	\caption{\label{fig:intro} (a) Self-Attention Network with positional 
		encoding, (b) Self-Attention Network with positional masking without any positional encoding}
\end{figure}


 The celebrated work of \citet{siegelmann1992computational} showed, assuming arbitrary precision, that RNNs are \emph{Turing-complete}, meaning that they are capable of carrying out any algorithmic task formalized by Turing machines. 
Recently, \citet{perez2019turing} have shown that vanilla Transformers with hard-attention can also simulate Turing machines given arbitrary precision. However, in contrast to RNNs, Transformers consist of several components and it is unclear which components are necessary for its Turing-completeness and thereby crucial to its computational expressiveness.

The role of various components of the Transformer in its efficacy is an important question for further improvements. Since the Transformer does not process the input sequentially, it requires some form of positional information. Various positional encoding schemes have been proposed to capture order information \cite{shaw-etal-2018-self,dai-etal-2019-transformer,Huang2018AnIR}. At the same time, on machine translation, \citet{yang2019assessing} showed that  the performance of Transformers with only positional masking \cite{shen2018disan} is comparable to that with positional encodings. In case of positional masking (Fig. \ref{fig:intro}), as opposed to explicit encodings, the model is only allowed to attend over preceding inputs and no additional positional encoding vector is combined with the input vector. \citet{kerneltsai2019transformer} raised the question of whether explicit encoding is necessary if positional masking is used. Additionally, since \citet{perez2019turing}'s Turing-completeness proof relied heavily on residual connections, they asked whether these connections are essential for Turing-completeness. In this paper, we take a step towards answering such questions.

\noindent Below, we list the main contributions of the paper,
\begin{itemize}
	\item We provide an alternate and arguably simpler proof to show that Transformers are Turing-complete by directly relating them to RNNs.
	\item More importantly, we prove that Transformers with positional masking and without positional encoding are also Turing-complete.
	\item We analyze the necessity of various components such as self-attention blocks, residual connections and feedforward networks for  Turing-completeness. Figure \ref{fig:transformer} provides an overview.
	\item  We explore implications of our results on machine translation and synthetic tasks.\footnote{We have made our source code available at \href{https://github.com/satwik77/Transformer-Computation-Analysis}{https://github.com/satwik77/Transformer-Computation-Analysis}.}
\end{itemize}

\section{Related Work}
%

\noindent \textbf{Computational Power of neural networks} has been studied since the foundational paper \citet{mcculloch1943logical}; in particular, among sequence-to-sequence models,
this aspect of RNNs has long been studied \cite{kolen2001field}. The seminal work by \citet{siegelmann1992computational} showed that 
RNNs can simulate a Turing machine by using unbounded precision. \citet{chen2017recurrent} showed that RNNs with ReLU activations are also Turing-complete.  Many recent works have explored the computational power of RNNs in practical settings. Several works \cite{merrill-etal-2020-formal}, \cite{weiss2018practical} recently studied the ability of RNNs to recognize counter-like languages. The capability of RNNs to recognize strings of balanced parantheses has also been studied \cite{sennhauser-berwick-2018-evaluating,skachkova-etal-2018-closing}. However, such analysis on Transformers has been scarce.

\noindent \textbf{Theoretical work on Transformers} was initiated by \citet{perez2019turing} who formalized the notion of Transformers and showed that it can simulate a Turing machine given arbitrary precision. Concurrent to our work, there have been several efforts to understand self-attention based models \cite{levine2020limits,kim2020lipschitz}. \citet{hron2020infinite} show that Transformers behave as Gaussian processes when the number of heads tend to infinity. \citet{hahn2019theoretical} showed some 
limitations of Transformer encoders in modeling regular and context-free languages. It has been recently shown that Transformers are universal approximators of sequence-to-sequence functions given arbitrary precision \cite{yun2019transformers}. However, these are not applicable\footnote{\citet{hahn2019theoretical} and \citet{yun2019transformers} study encoder-only seq-to-seq models with fixed length outputs in which the computation halts as soon as the last symbol of the input is processed. Our work is about the full Transformer (encoder and decoder) which is a seq-to-seq model with variable length sequence output in which the decoder starts operating sequentially after the encoder.} to the complete Transformer architecture. With a goal similar to ours, \citet{kerneltsai2019transformer} attempted to study the attention mechanism via a kernel formulation. However, a systematic study of various components of Transformers has not been done.



\section{Definitions and Preliminaries}

All the numbers used in our computations will be from the set of rational numbers denoted $\mbQ$. For a sequence $\mX = (\vx_{1},\ldots, \vx_n)$, we set $\mX_j := (\vx_{1},\ldots, \vx_j)$ for $1 \leq j \leq n$.
We will work with an alphabet $\Sigma$ of size $m$, with special symbols $\#$ and $\$$ signifying the beginning and end of the input sequence, respectively. The symbols are mapped to vectors via a given `base' embedding $\BaseEmbedding: \Sigma \rightarrow \bq^{d_b}$, where $d_b$ is the dimension of the embedding. E.g., this embedding could be the one used for processing the symbols by the RNN. 

We set $\BaseEmbedding(\startsym) = \vzero_{d_b} \text{ and } \BaseEmbedding(\$) = \vzero_{d_b}$. 
\emph{Positional encoding} is a function $\pos: \mbN \to \mbQ^{d_b}$. Together, these provide embedding for a symbol $s$ at position $i$ given by 
$f(\BaseEmbedding(s), \pos(i))$, often taken to be simply $\BaseEmbedding(s) + \pos(i)$. Vector $\oh{s} \in \mbQ^m$ denotes one-hot encoding of a symbol $s \in \Sigma$.


\subsection{RNNs}\label{subsec:def_rnn}

We follow \citet{siegelmann1992computational} in our definition of RNNs. To feed the sequences $s_1 s_2 \ldots s_n \in \Sigma^*$ to the RNN, these are converted to the vectors 
$\vx_1, \vx_2, \ldots, \vx_n$ where $\vx_i = \BaseEmbedding(s_i)$. The RNN is given by the recurrence  $\vh_t = g(  \mW_h \vh_{t-1} + \mW_x \vx_t + \vb)$, where
$t \geq 1$, function $g(\cdot)$ is a multilayer feedforward network (FFN) with activation  $\sigmoid$, bias vector $\vb \in \mbQ^{d_h}$, 
matrices $\mW_h \in \mbQ^{d_h \times d_h}$ and $\mW_x \in \mbQ^{d_h \times d_b}$, and $\vh_t \in \mbQ^{d_h}$ is the hidden state with given initial hidden state $\vh_0$; $d_h$ is the hidden state dimension. 

After the last symbol $s_n$ has been fed, we continue to feed the RNN with the terminal symbol $\BaseEmbedding(\$)$ until it halts. This allows the RNN
to carry out computation after having read the input. 

A class of seq-to-seq neural networks is Turing-complete if the class of languages recognized by the networks is exactly the class of languages recognized by Turing machines.

\begin{theorem}\label{th:TC_RNN_main}\cite{siegelmann1992computational}
Any seq-to-seq function $\Sigma^* \to \Sigma^*$ computable by a Turing machine can also be computed by an RNN.  
\end{theorem}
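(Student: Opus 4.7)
The plan is to reduce the problem to simulating a two-stack pushdown automaton, which is computationally equivalent to a Turing machine: represent the tape contents to the left of the head (bottom-up) by one stack and the contents from the head position rightward (top-down) by a second stack, so that each one-step Turing machine transition becomes a constant-size update on the tops of the two stacks together with a state change in a finite control. It then suffices to show that the RNN, with its rational-valued coordinates of arbitrary precision, can carry a faithful representation of two unbounded stacks plus a finite state inside the hidden vector $\vh_t$ and update them deterministically through the recurrence $\vh_t = g(\mW_h\vh_{t-1}+\mW_x\vx_t+\vb)$.

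The key ingredient is the Siegelmann--Sontag style encoding of a binary stack $s_1 s_2 \cdots s_k$ by the rational number $q=\sum_{i=1}^k (2 s_i+1)\,4^{-i}\in[0,1]$; this ``Cantor-in-base-4'' construction ensures that (i) the top symbol $s_1$ is recoverable from $q$ by a single piecewise-linear threshold on $\lfloor 4q\rfloor\in\{1,3\}$, and (ii) the three stack primitives are affine: $\mathrm{push}(b):q\mapsto q/4+(2b+1)/4$, $\mathrm{pop}:q\mapsto 4q-(2\,\mathrm{top}(q)+1)$, and $\mathrm{top}$ is the threshold above. Using one coordinate of $\vh_t$ per stack, a one-hot block of size $|Q|$ for the finite control, and a few scratch coordinates, each one-step update is a fixed depth piecewise-affine map that a constant-size feedforward network with a saturated activation $\sigma$ can realize exactly.

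I would then split the computation into two phases, matching the feeding protocol in Section~\ref{subsec:def_rnn}. During the \emph{reading phase} $t=1,\dots,n$, the RNN uses $\mW_x\vx_t$ to push the bit-level encoding of $\BaseEmbedding(s_t)$ onto an ``input stack'' coordinate while keeping the finite-control block in a dedicated ``read'' state; since $\BaseEmbedding(\startsym)=\vzero$, the leading $\startsym$ acts as a no-op. When the first $\BaseEmbedding(\termsym)=\vzero$ arrives, the control flips to ``simulate'' mode and, in a bounded number of autonomous steps, transfers the input stack (with the correct orientation) onto the two working stacks. From that point on the contribution $\mW_x\vx_t$ is identically zero, so the recurrence becomes an autonomous dynamical system that performs one Turing machine step per tick; halting is signalled by the control block entering a designated halting coordinate, and the output symbols are emitted from the top of a designated output stack exactly as required by a seq-to-seq function.

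The main obstacle I expect is not the high-level simulation but the verification that a bounded-size FFN with the activation $\sigma$ admitted by Section~\ref{subsec:def_rnn} can simultaneously (a) extract the tops of both stacks without contaminating the low-order digits of $q$, (b) perform the case analysis over the $|Q|\cdot 4$ possible (state, top-pair) combinations by a linear combination of $\sigma$-gates, and (c) write the result back into the stack coordinates while leaving the remaining digits of the encoding untouched. Handling (a) and (c) \emph{exactly} rather than approximately is what forces the use of arbitrary-precision rationals together with a piecewise-linear or suitably saturated $\sigma$; once these primitives are in place, composing them yields the required single-step simulation, and hence the theorem.
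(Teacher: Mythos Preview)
Your proposal is correct and matches the approach the paper attributes to \cite{siegelmann1992computational}: the paper does not give its own proof of this theorem but merely summarizes the Siegelmann--Sontag construction in the appendix (Sec.~\ref{subsec:ssrnn}), describing the hidden state as $\vh_t = [\vq_t, \stack_1, \stack_2]$ with a one-hot control block and two rational stack encodings in a Cantor-4 set, exactly as you outline. Your sketch of the base-4 encoding, the affine push/pop/top primitives, and the two-phase read-then-simulate protocol is in fact more detailed than what the paper itself provides.
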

\noindent For details please see section \ref{subsec:ssrnn} in appendix.

\subsection{Transformer Architecture}
\noindent\textbf{Vanilla Transformer.} We describe the original Transformer architecture with positional encoding  \cite{vaswani2017attention} as formalized by \citet{perez2019turing}, with
some modifications. All vectors in this subsection are from $\mbQ^d$.


The transformer, denoted $\Trans$, is a seq-to-seq architecture. Its input consists of (i) a sequence $\mX = (\vx_1,\ldots, \vx_n)$  of vectors, (ii) a seed vector $\vy_0$. The output is a sequence $\mY=(\vy_1,\ldots, \vy_r)$ of vectors. 
The sequence $\mX$ is obtained from the sequence $(s_1, \ldots, s_n) \in  \Sigma^{n}$ of symbols by using the embedding mentioned earlier: 
$\vx_i = f(\BaseEmbedding(s_i), \pos(i))$.

The transformer consists of composition of \emph{transformer encoder} and \emph{transformer decoder}. 
For the feedforward networks in the transformer layers we use the activation as in \citet{siegelmann1992computational}, namely the saturated linear activation function $\sigmoid(x)$ which takes value $0$ for $x<0$, value $x$ for $0 < x < 1$ and value $1$ for $x > 1$. This activation can be easily replaced by the standard ReLU activation via $\sigmoid(x) = \mathrm{ReLU}(x)-\mathrm{ReLU}(x-1)$.


\noindent\textbf{Self-attention.} The self-attention mechanism takes as input (i) a \emph{query} vector
$\vq$, (ii) a sequence of \emph{key} vectors $\mK = (\vk_1, \ldots , \vk_n)$, and (iii) a sequence of \emph{value} vectors $\mV = (\vv_1, \ldots , \vv_n)$. 
The $\vq$-attention over $\mK$ and $\mV$, denoted $\Att(\vq, \mK, \mV)$, is a vector $\va = \alpha_1\vv_1 + \alpha_2\vv_2 + \cdots +\alpha_n\vv_n$, where 
{(i)} $(\alpha_1, \ldots , \alpha_n) = \rho (\fatt(\vq, \vk_1), \ldots , \fatt(\vq, \vk_n))$.\\
\noindent {(ii)} The normalization function $\rho: \mbQ^n \to \mbQ_{\geq 0}^n$ is $\hardmax$: for $\vx = (x_1, \ldots, x_n) \in \mbQ^n$, if the maximum value occurs $r$ times among  $x_1, \ldots, x_n$, then 
$\hardmax(\vx)_i := 1/r$ if $x_i$ is a maximum value and $\hardmax(\vx)_i := 0$ otherwise. In practice, the $\softmax$ is often used but its output values are in general not rational. \\
\noindent {(iii)} For vanilla transformers, the scoring function $f^\att$ used is a combination of multiplicative attention \cite{vaswani2017attention} and a non-linear function: $\fatt(\vq, \vk_i) = -\left | \<\vq, \vk_i \> \right |$. This was also used by \citet{perez2019turing}. \\
\noindent \textbf{Transformer encoder.} A \emph{single-layer encoder} is a function $\Enc(\mX; \btheta)$, 
with input $\mX = (\vx_1, \ldots, \vx_n)$ a sequence of vectors in $\mbQ^d$, and parameters $\btheta$. The output is another sequence $\mZ = (\vz_1, \ldots, \vz_n)$ of vectors
in $\mbQ^d$. The parameters $\btheta$ specify functions $Q(\cdot), K(\cdot), V(\cdot)$, and $O(\cdot)$, all of type $\mbQ^d \to \mbQ^d$. The functions
$Q(\cdot), K(\cdot),$ and $ V(\cdot)$ are linear transformations and $O(\cdot)$ an FFN. 
For $1 \leq i \leq n$, the output of the self-attention block is produced by 
\begin{equation}
\va_i = \Att(Q(\vx_i), K(\mX), V(\mX)) + \vx_i\label{eq:enc-enc-att-trans-main}
\end{equation}
This operation is also referred to as the encoder-encoder attention block. The output $\mZ$ is computed by  $\vz_i = O(\va_i) + \va_i$ for $1 \leq i \leq n$.  The addition operations $+\vx_i$ and $+\va_i$ are the  residual connections. The complete $L$-layer transformer encoder $\TEnc^{\brac{L}}(\mX; \btheta)= (\mK^e, \mV^e)$ has the same input $\mX = (\vx_1, \ldots, \vx_n)$ as the single-layer encoder. 
In contrast, its output  $\mK^e = (\vke_1, \ldots, \vke_n)$ and $\mV^e = (\vv^e_1, \ldots \vv^e_n)$ contains two sequences.
$\TEnc^{\brac{L}}$ is obtained by composition of $L$ single-layer encoders: let $\mX^{\brac{0}} := \mX $, and for $0\leq \ell \leq L-1$, let 
$\mX^{\brac{\ell+1}} = \Enc(\mX^{\brac{\ell}}; \btheta_\ell)$ and finally, $\mK^e = K^{\brac{L}}(\mX^{\brac{L}}), \quad \mV^e = V^{\brac{L}}(\mX^{\brac{L}}).$ \\
\noindent\textbf{Transformer decoder.} 
%
The input to a \emph{single-layer decoder}  is (i) $(\mK^e, \mV^e)$ output by the encoder, and (ii) sequence $\mY = (\vy_1, \ldots, \vy_k)$ of vectors for $k \geq 1$. The output is another sequence $\mZ = (\vz_1, \ldots, \vz_k)$.

Similar to the single-layer encoder, a single-layer decoder is parameterized by functions $Q(\cdot), K(\cdot), V(\cdot)$ and $O(\cdot)$ and is defined by 
\begin{eqnarray}
\vp_t & = & \Att(Q(\vy_t),K(\mY_t),V(\mY_t)) 
+ \vy_t, \label{eq:dec-dec-main} \\
\va_t & = & \Att(\vp_t,\mK^e,\mV^e) 
+ \vp_t, \label{eq:enc-dec-main} \\
\vz_t & = & O(\va_t) + \va_t, \nonumber 
\end{eqnarray}
where $1 \leq t \leq k$. The operation in (\ref{eq:dec-dec-main}) will be referred to as the \emph{decoder-decoder attention} block and the operation in (\ref{eq:enc-dec-main}) as the
\emph{decoder-encoder attention} block. In (\ref{eq:dec-dec-main}), positional masking is applied to prevent the network from attending over symbols which are ahead of them.

An $L$-layer Transformer decoder $\TDec^L((\mK^e, \mV^e), \mY; \btheta) = \vz$ is obtained by repeated application of $L$ single-layer decoders each with its own parameters, and a transformation function $F : \bq^d \rightarrow \bq^d$ applied to the last vector in the sequence of vectors output by the final decoder. 
Formally, for $0 \leq \ell \leq L-1$ and $\mY^0 := \mY$ we have
$\mY^{\ell+1} = \Dec((\mK^e, \mV^e), \mY^\ell; \btheta_\ell), \quad \vz = F(\vy_{k}^L).$ 
Note that while the output of a single-layer decoder is a sequence of vectors, the output of an $L$-layer Transformer decoder is a single vector.


\noindent\textbf{The complete Transformer.}
The output $\Trans(\mX,\vy_0) = \mY$ is computed by the recurrence
$\tilde{\vy}_{t+1}  =  \TDec(\TEnc(\mX), (\vy_0,\vy_1,\ldots,\vy_{t}))$,  for $0 \leq t \leq r-1$.
We get $\vy_{t+1}$ by adding positional encoding: $\vy_{t+1} = \tilde{\vy}_{t+1} + \pos(t+1)$. 

\noindent\textbf{Directional Transformer.} 
We denote the Transformer with only positional masking and no positional encodings as Directional Transformer and use them interchangeably. In this case, we use standard multiplicative attention as the scoring function in our construction, i.e, $\fatt(\vq, \vk_i) =  \< \vq, \vk_i \>$.
The general architecture is the same as for the vanilla case; the differences due to positional masking are the following. 

There are no positional encodings. So the input vectors $\vx_i$ only involve $\BaseEmbedding(s_i)$. 
Similarly, $\vy_t = \Tilde{\vy}_t$. 
In (\ref{eq:enc-enc-att-trans-main}), $\Att(\cdot)$ is replaced by 
$\Att(Q(\vx_i), K(\mX_i), V(\mX_i))$ where $\mX_i := (\vx_{1},\ldots, \vx_i)$ for $1 \leq i \leq n$.
Similarly, in (\ref{eq:enc-dec-main}), $\Att(\cdot)$ is replaced by
$\Att(\vp_t,\mK^{e}_{t},\mV^{e}_{t})$.  \\
\noindent\textbf{Remark 1.} Our definitions deviate slightly from practice, hard-attention being the main one since $\hardmax$ keeps the values rational whereas $\softmax$ takes the values to irrational space. Previous studies have shown that soft-attention behaves like hard-attention in practice and \citet{hahn2019theoretical} discusses its practical relevance. 

\noindent \textbf{Remark 2.} Transformer Networks with positional encodings are not necessarily equivalent in terms of their computational expressiveness \cite{yun2019transformers} to those with only positional masking when considering the encoder only model (as used in BERT and GPT-2). Our results in Section \ref{subsec:tc_res} show their equivalence in terms of expressiveness for the complete seq-to-seq architecture.

\section{Primary Results}\label{sec:results}

\subsection{Turing-Completeness Results}\label{subsec:tc_res}

In light of Theorem~\ref{th:TC_RNN_main}, to prove that Transformers are Turing-complete, it suffices to show that they can \emph{simulate} RNNs. We say that a Transformer simulates an RNN (as defined in Sec.~\ref{subsec:def_rnn}) if on every input $s \in \Sigma^*$, at each step $t$, the vector 
$\vy_{t}$ contains the hidden state $\vh_t$ as a subvector, i.e. $\vy_{t} = [\vh_t, \cdot]$, and halts at the same step as the RNN. 

\begin{theorem}\label{th:TC_trans_informal}
	The class of Transformers with positional encodings is Turing-complete.
\end{theorem}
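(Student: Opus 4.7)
The plan is to use Theorem~\ref{th:TC_RNN_main} and reduce the problem to constructing, for every RNN, a Transformer that \emph{simulates} it in the sense defined just above: on input $s_1 \ldots s_n \in \Sigma^*$, the $t$-th decoder output is $\vy_t = [\vh_t, \cdot]$ and the network halts exactly when the RNN does. Fixing an RNN with recurrence $\vh_t = g(\mW_h \vh_{t-1} + \mW_x \vx_t + \vb)$, I would build a single-layer encoder and a single-layer decoder whose parameters depend only on $g, \mW_h, \mW_x, \vb$, and then verify the simulation property by induction on $t$.

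For the encoder, set $\vx_i = \BaseEmbedding(s_i) + \pos(i)$ where the positional encoding $\pos$ is chosen (following \citet{perez2019turing}, e.g.\ using coordinates like $1/i$ or $(\sin, \cos)$ pairs) so that positions can be discriminated by the scoring function $\fatt(\vq, \vk) = -|\langle \vq, \vk\rangle|$ under hard-attention. Pick $Q(\cdot)$ so the internal self-attention of $\Enc$ is uninformative and the residual $+\vx_i$ carries each $\vx_i$ through unchanged; then let $K^{\brac{L}}, V^{\brac{L}}$ project so that $\vke_i$ exposes $\pos(i)$ in dedicated coordinates and $\vve_i$ exposes $\BaseEmbedding(s_i)$. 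In effect, $(\mK^e, \mV^e)$ becomes a look-up table indexed by position.

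For the decoder, I maintain in $\vy_{t-1}$ both $\vh_{t-1}$ and, via the outer recurrence $\vy_t = \tilde\vy_t + \pos(t)$, the positional code for the next step $t$. Make the decoder-decoder attention block trivial (attend to $\vy_{t-1}$ itself so that the residual yields $\vp_t \approx \vy_{t-1}$). In the decoder-encoder block, design $Q$ so that the query $Q(\vp_t)$ uses only the positional slot of $\vy_{t-1}$ and $-|\langle Q(\vp_t), \vke_i\rangle|$ is uniquely maximized at $i = t$ (and saturates at $i = n$ when $t > n$, so $\BaseEmbedding(\termsym)$ is retrieved). Then $\va_t$ contains $\vh_{t-1}$ and $\vx_t$ side by side, and the FFN $O(\cdot)$ is set to compute $g(\mW_h \vh_{t-1} + \mW_x \vx_t + \vb)$ in the hidden-state coordinates while zeroing the rest; this is possible because $g$ is itself an FFN with the saturated-linear activation $\sigmoid$, exactly the activation the Transformer's FFN uses.

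The principal technical obstacle is the position-selection mechanism: with hard-attention we need $-|\langle Q(\vp_t), \vke_i\rangle|$ to attain a \emph{unique} maximum at $i=t$ for every $t$ while degrading gracefully to $i=n$ once $t > n$, so that the RNN's post-input computation on $\BaseEmbedding(\termsym)$ is faithfully continued. Once this is in place, halting is implemented by dedicating one coordinate of $\vh_t$ as a halting flag inspected by the final transformation $F$, so that termination of the RNN triggers termination of the Transformer. With these pieces, an induction on $t$ shows $\vy_t = [\vh_t, \cdot]$, and Theorem~\ref{th:TC_RNN_main} then upgrades RNN-simulation to full Turing-completeness.
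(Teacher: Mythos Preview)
Your proposal is correct and follows essentially the same route as the paper: reduce to RNN simulation via Theorem~\ref{th:TC_RNN_main}, make the encoder and the decoder-decoder attention act as the identity through the residual connections, use decoder-encoder hard-attention keyed on position to fetch the current input symbol (saturating at $s_n=\$$ once $t\ge n$), and let the decoder FFN $O(\cdot)$ implement the RNN update. The one concrete simplification in the paper relative to your sketch is the positional encoding: rather than P\'erez-style $1/i$ or sinusoids, it suffices to place the raw integer $i$ and a constant $1$ in two dedicated coordinates, so that with $\vk_i$ carrying $(-1,i)$ and $\vp_t$ carrying $(t{+}1,1)$ in those slots one gets $-|\langle \vp_t,\vk_i\rangle| = -|i-(t{+}1)|$, which is trivially uniquely maximized at $i=t{+}1$ (and at $i=n$ for $t\ge n$); also keep in mind that because of the residual $\vz_t = O(\va_t)+\va_t$, your $O(\cdot)$ must output $\vh_t - \vh_{t-1}$ (and cancel the other slots), not $\vh_t$ directly.
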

\begin{proof}[Proof Sketch]

The input $s_0, \ldots, s_n \in \Sigma^*$ is provided to the transformer as the sequence of vectors $\vx_0, \ldots, \vx_n$, where 
$\vx_i = [  \vzero_{d_h}, \BaseEmbedding(s_i), \vzero_{d_h}, i, 1 ]$, which has as sub-vector the given base embedding $ \BaseEmbedding(s_i)$ and the positional encoding $i$, along with extra coordinates set to constant values and will be used later.

The basic observation behind our construction of the simulating Transformer is that the transformer decoder can naturally implement the recurrence operations of the type used by RNNs. To this end, the FFN $O^{\mathrm{dec}}(\cdot)$ of the decoder, which plays the same role as the FFN component of the RNN, needs sequential access to the input in the same way as RNN. But the Transformer receives the whole input at the same time. We utilize positional encoding along with the attention mechanism to isolate $\vx_t$ at time $t$ and feed it to $O^{\mathrm{dec}}(\cdot)$, thereby simulating the RNN.

As stated earlier, we append the input $s_1, \ldots, s_n$ of the RNN with $\$$'s until it halts. Since the Transformer takes its input all at once, appending by $\$$'s is not possible (in particular, we do not know how long the computation would take). Instead, we append the input with a single $\$$. After encountering a $\$$ once, the Transformer will feed (encoding of) $\$$ to $O^{\mathrm{dec}}(\cdot)$ in subsequent steps until termination.
Here we confine our discussion to the case $t \leq n$; the $t > n$ case is slightly different but simpler.

The construction is straightforward: it has only one head, one encoder layer and one decoder layer; moreover, the attention mechanisms in the encoder and the decoder-decoder attention block of the decoder are trivial as described below.

The encoder attention layer does trivial computation in that it merely computes the identity function: $\vz_i = \vx_i$, which can be easily achieved, e.g. by using the residual connection and setting the value vectors to $\vzero$. The final $K^{\brac{1}}(\cdot)$ and $V^{\brac{1}}(\cdot)$ functions bring $(\mK^e, \mV^e)$ into useful forms by appropriate linear transformations: $	\vk_i  =  [ \vzero_{d_b}, \vzero_{d_b}, \vzero_{d_b}, -1, i ]$ and $\vv_i  =  [  \vzero_{d_b}, \BaseEmbedding(s_i), \vzero_{d_b}, 0, 0 ]$.
Thus, the key vectors only encode the positional information and the value vectors only encode the input symbols.

\begin{figure}[t]
	\centering
	\includegraphics[scale=0.55]{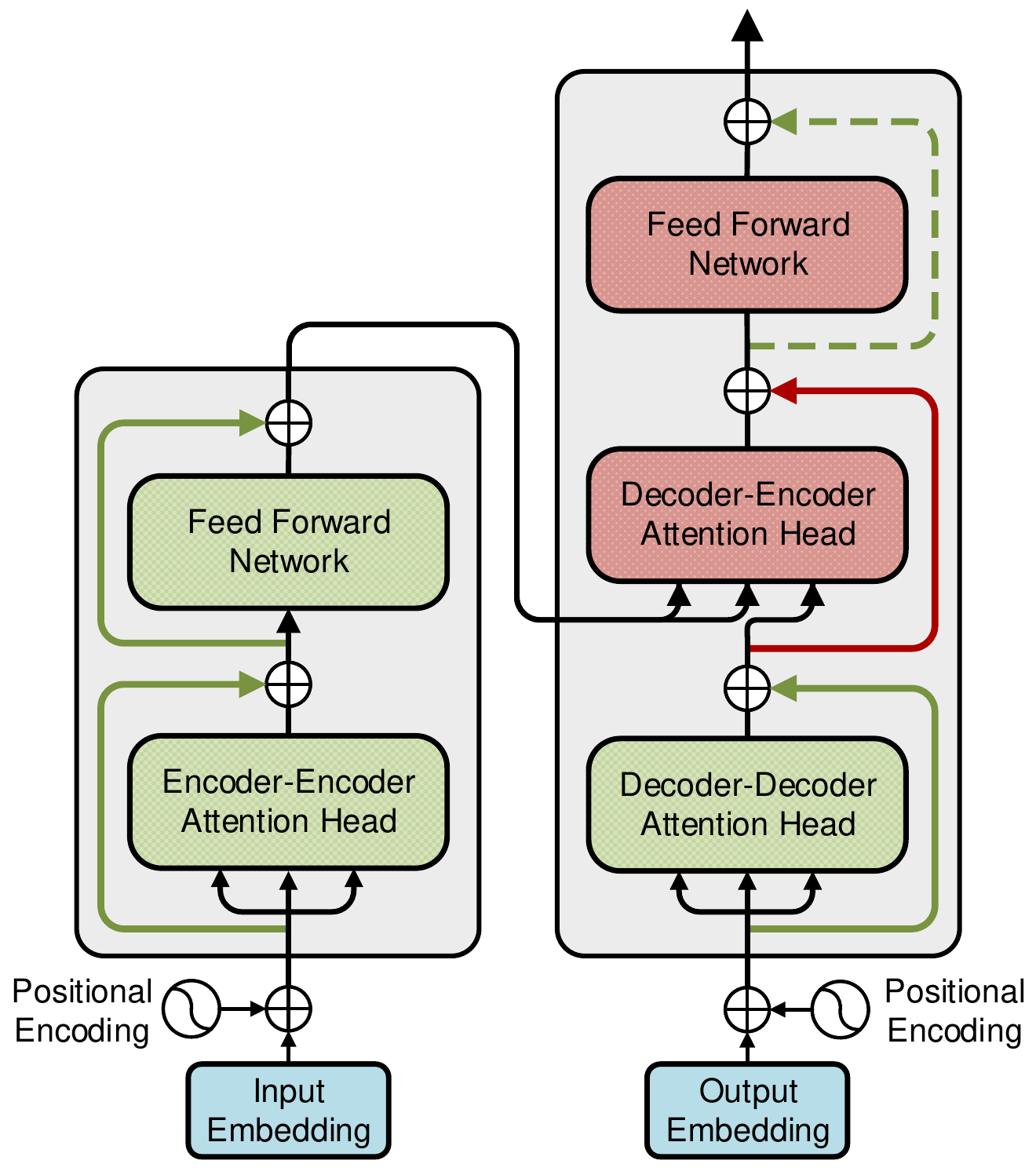}
	\caption{\label{fig:transformer} Transformer network with various 
		components highlighted. The components marked red are essential for the 
		Turing-completeness whereas for the pairs of blocks and residual connections marked green, 
		either one of the component is enough. The dashed residual connection is 
		not necessary for Turing-completeness of the network.}
\end{figure}

The output sequence of the decoder is $\vy_1, \vy_2, \ldots$. Our construction will ensure, by induction on $t$, that $\vy_t$ contains the hidden states $\vh_{t}$ of the RNN as a sub-vector along with positional information: $\vy_t  =  [  \vh_{t}, \vzero_{d_b}, \vzero_{d_b}, t+1, 1 ]$. This is easy to arrange for $t=0$, and assuming it for $t$ we prove it for $t+1$. As for the encoder, the decoder-decoder attention block acts as the identity: $\vp_t = \vy_t$. Now, using the last but one coordinate in $\vy_t$ representing the time $t+1$, the attention mechanism
$\Att(\vp_t, \mK^e, \mV^e)$ can retrieve the embedding of the $t$-th input symbol $\vx_t$. This is possible because in the key vector $\vk_i$ mentioned above, almost all coordinates other than the one representing the position $i$ are set to $0$, allowing the mechanism to only focus on the positional information and not be distracted by the other contents of $\vp_t = \vy_t$: the scoring function has value $f^{\att}(\vp_t, \vk_i) = -|\< \vp_t, \vk_i \>| = -|i-(t+1)|$. For a given $t$, it is maximized at $i = t+1$ for $t < n$ and at $i=n$ for $t \geq n$. This use of scoring function is similar to \citet{perez2019turing}.

At this point, $O^{\mathrm{dec}}(\cdot)$ has at its disposal the hidden state $\vh_{t}$ (coming from $\vy_t$ via $\vp_t$ and the residual connection) and the input symbol $\vx_t$ (coming via the attention mechanism and the residual connection). Hence  $O(\cdot)$ can act just like the FFN (Lemma \ref{lem:rnn_trans_comp}) underlying the RNN to compute $\vh_{t+1}$ and thus $\vy_{t+1}$, proving the induction hypothesis. 
The complete construction can be found in Sec.~\ref{subsec:trans_tc} in the appendix.

\end{proof}

\begin{theorem}\label{th:TC_dirtrans_informal}
	The class of Transformers with positional masking and no explicit positional encodings is Turing-complete.
\end{theorem}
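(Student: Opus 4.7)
The plan is to follow the strategy of Theorem~\ref{th:TC_trans_informal} and simulate an RNN step by step, storing $\vh_t$ as a sub-vector of $\vy_t$ and letting the decoder's FFN $O^{\mathrm{dec}}$ carry out the recurrence computation. The essential new difficulty is that positional encodings are no longer available, so all positional information must be synthesized from the positional masking itself.

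The central mechanism is uniform hard-attention over a masked prefix: under $\hardmax$, if all attention scores are equal then each of the $i$ attended positions receives weight $1/i$. An encoder head whose value function is supported (up to a fixed constant) only at the first position therefore produces at position $i$ an output of the form $c/i$, giving an explicit fractional timestamp $\tau_i := 1/i$ that can be written into a new coordinate of $\vx_i$ after the first encoder layer. The "supported only at the first position" pattern is produced by first augmenting the input using the freedom in the input function $f$ to include a distinguishing feature for $\startsym$, and then passing through a small FFN gadget built from the saturated-linear activation $\sigmoid$ which reads this indicator and sets the value coordinate to a constant at position $1$ and to $0$ elsewhere. All subsequent encoder layers are trivialized via the residual connections, so that the final encoder keys $\vk^e_j$ carry $\tau_j$ while the values $\vv^e_j$ carry $\BaseEmbedding(s_j)$.

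For retrieval in the decoder, observe that the decoder-encoder attention at step $t$ is already masked to the encoder prefix of length $t$, and we wish to extract its \emph{last} element $\BaseEmbedding(s_t)$. Using the standard multiplicative scoring $\langle \vq, \vk_j \rangle$ with a query whose relevant coordinate is negative paired against a key whose relevant coordinate carries $\tau_j = 1/j$, the resulting scores $-1/j$ are strictly increasing in $j$, so $\hardmax$ selects the unique maximizer at $j = t$. The retrieved $\BaseEmbedding(s_t)$ combines via the residual with $\vh_{t-1}$, which is forwarded from $\vy_{t-1}$ by an identity-like decoder-decoder attention, and $O^{\mathrm{dec}}$ is configured to replicate the RNN's FFN on the pair $(\vh_{t-1}, \BaseEmbedding(s_t))$, yielding $\vh_t$ inside $\vy_t$. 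For $t > n$ the masked encoder prefix stays at length $n$, so the retrieval continues to return $\BaseEmbedding(s_n) = \BaseEmbedding(\termsym) = \vzero$, matching the post-input behavior of the simulated RNN; the decoder then halts in lockstep with the RNN, giving the required simulation and hence Turing-completeness.

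The main technical obstacle is the initial timestamp construction: synthesizing the fraction $1/i$ robustly at every position using only linear $Q, K, V$ maps, a single uniform hard-attention head, and one $\sigmoid$-FFN. The subtlety is that the convention $\BaseEmbedding(\startsym) = \BaseEmbedding(\termsym) = \vzero$ collapses the start and end symbols at the base-embedding level, so the augmentation via the input function $f$ must be designed carefully enough that the FFN gadget reliably isolates position~$1$ without being confused by the terminal position~$n$. Verifying this, together with confirming that the downstream retrieval scoring is \emph{strictly} monotone in $\tau_j$ so that $\hardmax$ returns a single vector rather than an average, is the delicate part of the argument; once these are in place, the rest of the construction transfers almost verbatim from the proof of Theorem~\ref{th:TC_trans_informal}.
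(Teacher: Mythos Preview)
Your approach is correct in outline and takes a genuinely different route from the paper's. Both arguments simulate an RNN and let the decoder FFN carry out the recurrence; the divergence is in how the $t$-th input symbol is recovered without positional encodings.

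The paper's construction maintains, in the decoder state, the vector $\prop_t \in \mbQ^{|\Sigma|}$ of symbol \emph{proportions} up to step $t$ (obtained by uniform cross-attention over the masked encoder prefix). The key observation is that $\sigmoid(\prop_t - \prop_{t-1})$ has a unique positive coordinate, namely the one for $s_t$; using this as the query against one-hot keys $\oh{s_j}$ makes $\hardmax$ select exactly the positions $j$ with $s_j = s_t$, and since the value vectors depend only on $s_j$ their average equals $\vv^e_t$. This needs two decoder layers (one to form $\prop_t$ and $\diff_t$, one to do the symbol-matching retrieval) and a separate $1/2^{t+1}$ coordinate to keep the terminal-symbol score nonzero for $t > n$.

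Your construction instead pushes the work into the \emph{encoder}: a single masked self-attention head with uniform scores and a value supported only at the start-symbol position manufactures $\tau_i = c/i$ at every encoder position. Then a single decoder layer with multiplicative scoring against $\tau_j$ and a fixed negative query coordinate yields strictly increasing scores $-c/j$, so $\hardmax$ picks out the last visible position $j = \bar t$ directly. This is arguably cleaner: one decoder layer instead of two, no proportion bookkeeping, and the $t > n$ case (``keep returning $\$$'') falls out automatically without the $1/2^{t+1}$ device. The ``main technical obstacle'' you flag---distinguishing $\#$ from $\$$ despite $\BaseEmbedding(\#)=\BaseEmbedding(\$)=\vzero$---is a non-issue once you note (as the paper itself does in its directional construction) that the input embedding may include $\oh{s_i}$; then isolating $\#$ is a linear map and no FFN gadget is needed for the value vector.

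One small slip to clean up: with the paper's masking convention, at decoder position $t$ you retrieve $s_{\bar t}$, so consistency with the RNN recurrence forces $\vy_t$ to carry $\vh_{t-1}$ (as in the paper's directional proof), not $\vh_t$ as you state at the outset. Your later sentence ``$\vh_{t-1}$, which is forwarded from $\vy_{t-1}$ by an identity-like decoder-decoder attention'' is then off by one, since the identity block gives $\vp_t = \vy_t$, not $\vy_{t-1}$. This is purely an indexing wobble and does not affect the soundness of the mechanism.
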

\begin{proof}[Proof Sketch] As before, by Theorem~\ref{th:TC_RNN_main} it suffices to show that Transformers can simulate RNNs.
The input $s_0, \ldots, s_n$ is provided to the transformer as the sequence of vectors $\vx_0, \ldots, \vx_n$, where 
$\vx_i =  [ \vzero_{d_h}, \vzero_{d_h}, \BaseEmbedding(s_i),    \oh{s_i}, 0, \vzero_{m}, \vzero_{m}, \vzero_{m} ]$.
The general goal for the directional case is similar to the vanilla case, namely we would like the FFN $O^{\mathrm{dec}}(\cdot)$ of the decoder to
directly simulate the computation in the underlying RNN. In the vanilla case, positional encoding and the attention mechanism helped us feed input $\vx_t$ at the $t$-th iteration of the decoder to $O^{\mathrm{dec}}(\cdot)$. However, we no longer have explicit positional information in the input $\vx_t$ such as a coordinate with value $t$. The key insight is that we do not need the positional information explicitly to recover $\vx_t$ at step $t$: in our construction, the attention mechanism with masking will recover $\vx_t$ in an indirect manner even though it's not able to ``zero in'' on the $t$-th position. 

Let us first explain this without details of the construction. We maintain in vector $\prop_t \in \bq^{m}$, with a coordinate each for symbols in $\Sigma$, the fraction of times the symbol has occurred up to step $t$. Now, at a step $t \leq n$, for the difference $\prop_{t}-\prop_{t-1}$  (which is part of the query vector), it can be shown easily that only the coordinate corresponding to $s_t$ is positive. Thus after applying the linearized sigmoid $\sigmoid(\prop_{t}-\prop_{t-1})$, we can isolate the coordinate corresponding to $s_t$. Now using this query vector, the (hard) attention mechanism will be able to retrieve the value vectors for all indices $j$ such that $s_j = s_t$ and output their average. Crucially, the value vector for an index $j$ is essentially $\vx_j$ which depends only on $s_j$. Thus, all these vectors are equal to $\vx_t$, and so is their average. This recovers $\vx_t$, which can now be fed to 
$O^{\mathrm{dec}}(\cdot)$, simulating the RNN. 

We now outline the construction and relate it to the above discussion. As before, for simplicity we restrict to the case $t \leq n$. We use only one head,
one layer encoder and two layer decoder. 
The encoder, as in the vanilla case, does very little other than pass information along. The vectors in $(\mK^e, \mV^e)$ are obtained by the trivial attention mechanism followed by simple linear transformations: $\vke_i = [ \vzero_{d_h}, \vzero_{d_h}, \vzero_{d_b},  \oh{s_i}, 0, \vzero_{m}, \vzero_{m}, \vzero_{m}]$ and 
$\vve_i =  [\vzero_{d_h}, \vzero_{d_h}, \BaseEmbedding(s_i), \vzero_{m}, 0,  \vzero_{m}, \oh{s_i}, \vzero_{m} ]$. 

Our construction ensures that at step $t$ we have 
$ \vy_t  = [  \vh_{t-1},  \vzero_{d_h}, \vzero_{d_b},   \vzero_{m}, \frac{1}{2^t}, \vzero_{m}, \vzero_{m}, \prop_{t-1} ]$. As before, the proof is by induction on $t$.

In the first layer of decoder, the decoder-decoder attention block is trivial: $\vp_t^{\brac{1}} = \vy_t$. In the decoder-encoder attention block, we give
equal attention to all the $t+1$ values, which along with $O^{\mathrm{enc}}(\cdot)$, leads to  
$\vz_{t}^{\brac{1}} =  [ \vh_{t-1}, \;\; \vzero_{d_h} ,  \;\; \vzero_{d_b}, 
\diff_{t}, \;\; \frac{1}{2^{t+1}}, \vzero_{m}, \vzero_{m}, \;\;\prop_{t} ]$, where essentially $\diff_{t} = \sigmoid(\prop_{t}-\prop_{t-1})$, except with a change for the last coordinate due to special status of the last symbol $\$$ in the processing of RNN.

In the second layer, the decoder-decoder attention block is again trivial with $\vp_{t}^{\brac{2}} = \vz_{t}^{\brac{1}}$. We remark that in this construction, the scoring function is the standard multiplicative attention \footnote{Note that it is closer to practice than the scoring function $-|\< \vq, \vk \>|$ used in \citet{perez2019turing} and Theorem \ref{th:TC_trans_informal} }. 
Now  $\<\vp_{t}^{\brac{2}}, \vk^e_{j}\> = \<\diff_t, \oh{s_j} \> = \diff_{t, j}$, which is positive if and only if $s_j = s_t$, as mentioned earlier. Thus attention weights in $\Att(\vp_t^{\brac{2}}, \mK^e_{t}, \mV^e_{t})$ satisfy 
$\hardmax(\<\vp_{t}^{\brac{2}}, \vk^e_{1}\>, \ldots, \<\vp_{t}^{\brac{2}}, \vk^e_{t}\>) = \frac{1}{\lambda_{t}}(\indicator(s_0=s_t), \indicator(s_1=s_t), \ldots, \indicator(s_{t}=s_t))$, where $\lambda_t$ is a normalization constant and $\indicator(\cdot)$ is the indicator.  See Lemma \ref{lem:dec-enc-l2-attention} for more details.

At this point, $O^{\mathrm{dec}}(\cdot)$ has at its disposal the hidden state $\vh_{t}$ (coming from $\vz^{\brac{1}}_t$ via $\vp^{\brac{2}}_t$ and the residual connection) and the input symbol $\vx_t$ (coming via the attention mechanism and the residual connection). Hence $O^{\mathrm{dec}}(\cdot)$ can act just like the FFN underlying the RNN to compute $\vh_{t+1}$ and thus $\vy_{t+1}$, proving the induction hypothesis. 

The complete construction can be found in Sec.~\ref{sec:directional} in the Appendix. 

\end{proof}

\textbf{In practice}, \citet{yang2019assessing} found that for NMT, Transformers with only positional masking achieve comparable performance compared to the ones with positional encodings. Similar evidence was found by \citet{kerneltsai2019transformer}. Our proof for directional transformers entails that there is no loss of order information if positional information is only provided in the form of masking. However, we do not recommend using masking as a replacement for explicit encodings. The computational equivalence of encoding and masking given by our results implies that any differences in their performance must come from differences in learning dynamics. 

\subsection{Analysis of Components}

The results for various components follow from our construction in Theorem \ref{th:TC_trans_informal}. Note that in both the encoder and decoder attention blocks, we need to compute the identity function. We can nullify the role of the attention heads by setting the value vectors to zero and making use of only the residual connections to implement the identity function. Thus, even if we remove those attention heads, the model is still Turing-complete. On the other hand, we can remove the residual connections around the attention blocks and make use of the attention heads to implement the identity function by using positional encodings. Hence, either the attention head or the residual connection is sufficient to achieve Turing-completeness. A similar argument can be made for the FFN in the encoder layer: either the residual connection or the FFN is sufficient for Turing-completeness. 
For the decoder-encoder attention head, since it is the only way for the decoder to obtain information about the input, it is necessary for the completeness. The FFN is the only component that can perform computations based on the input and the computations performed earlier via recurrence and hence, the model is not 
Turing-complete without it. Figure \ref{fig:transformer} summarizes the role of different components with respect to the computational expressiveness of the network.

\begin{proposition}
	The class of Transformers without residual connection around the decoder-encoder attention block is not Turing-complete.
\end{proposition}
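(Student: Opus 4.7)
The plan is to exploit a structural consequence of removing the residual around the decoder-encoder attention block: the FFN $O(\cdot)$ of the decoder then receives only $\va_t^{\brac{\ell}}=\Att(\vp_t^{\brac{\ell}},\mK^e,\mV^e)$, which under hardmax is the uniform average over some non-empty subset of the $n$ encoder value vectors $\vv_1^e,\ldots,\vv_n^e$ and hence lies in a finite set of size at most $2^n-1$. This closes off the only channel by which the evolving decoder state $\vp_t^{\brac{\ell}}$ (which, in the construction of Theorem~\ref{th:TC_trans_informal}, carries the RNN hidden state $\vh_t$ through the decoder-decoder residual) can reach the FFN, so the RNN recurrence $\vh_{t+1}=\mathrm{FFN}(\vh_t,\vx_t)$ can no longer be maintained across decoder steps.

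The formalization proceeds in three steps. First, fix any Transformer in the class and any input of length $n$. By the hardmax definition, $\va_t^{\brac{\ell}}$ takes at most $2^n-1$ distinct values; hence $\vz_t^{\brac{\ell}}=O(\va_t^{\brac{\ell}})+\va_t^{\brac{\ell}}$ depends only on $\va_t^{\brac{\ell}}$ and is also confined to a set of size at most $2^n$. Inducting on $\ell\geq 1$, every per-layer decoder output $\vy_t^{\brac{\ell}}$ lies in a finite set $\mathcal{Z}^{\brac{\ell}}$ of cardinality at most $2^n$, and therefore $\tilde{\vy}_t=F(\vy_t^{\brac{L}})$ lies in a finite set $\mathcal{F}$ with $|\mathcal{F}|\leq 2^n$. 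Second, recall the simulation notion from Section~\ref{subsec:tc_res}: a Transformer simulates an RNN if $\vy_t=[\vh_t,\cdot]$ at every step. Since $\vy_t=\tilde{\vy}_t+\pos(t)$ and $\pos(t)$ is input-independent, the input-dependent variation of the coordinates of $\vy_t$ storing $\vh_t$ is confined to $\mathcal{F}$, so across all $t$ the hidden state $\vh_t$ can take at most $|\mathcal{F}|\leq 2^n$ distinct input-dependent values. Third, by Theorem~\ref{th:TC_RNN_main} one can exhibit an RNN---for example, one maintaining an unbounded counter in a coordinate of its hidden state---whose $\vh_t$ takes more than $2^n$ distinct values on some input of length $n$ before halting; by pigeonhole no Transformer in the class can simulate it, so by Theorem~\ref{th:TC_RNN_main} the class fails to compute some Turing-computable seq-to-seq function, i.e., is not Turing-complete.

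The main technical obstacle is the second step: one must cleanly separate the input-dependent part of $\vy_t$ (bounded by $\mathcal{F}$) from the fixed positional encoding $\pos(t)$, so that the cardinality bound on $\tilde{\vy}_t$ correctly translates into a bound on distinct values of $\vh_t$ during the simulation. Under the strict simulation notion of Section~\ref{subsec:tc_res}, the argument goes through immediately; more permissive encodings of $\vh_t$ distributed across several decoder outputs would require a slightly more careful counting, still grounded in the finite range of $\tilde{\vy}_t$ at each $t$.
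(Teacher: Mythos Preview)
Your first step is correct and matches the paper exactly: without the residual, $\va_t^{\brac{\ell}}=\Att(\vp_t^{\brac{\ell}},\mK^e,\mV^e)$ is a hardmax-average over a nonempty subset of $\{\vv^e_1,\ldots,\vv^e_n\}$, hence lies in a set of size at most $2^n-1$; since $\vz_t^{\brac{\ell}}=O(\va_t^{\brac{\ell}})+\va_t^{\brac{\ell}}$ depends on $\va_t^{\brac{\ell}}$ alone, the same bound holds for $\vz_t^{\brac{\ell}}$ and hence for $\tilde{\vy}_{t+1}=F(\vz_t^{\brac{L}})$. The induction over $\ell$ is not even needed: the last layer alone forces the bottleneck.

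The gap is in your second and third steps. You argue that no Transformer in the class can \emph{simulate} (in the sense of $\vy_t=[\vh_t,\cdot]$ from Section~\ref{subsec:tc_res}) an RNN whose hidden state takes more than $2^n$ values, and then conclude the class is not Turing-complete ``by Theorem~\ref{th:TC_RNN_main}''. That inference is invalid. Simulation is the device the paper uses to prove the \emph{positive} result (Theorems~\ref{th:TC_trans_informal} and~\ref{th:TC_dirtrans_informal}): it is sufficient, not necessary. A Transformer could in principle compute the same seq-to-seq function as an RNN without carrying $\vh_t$ verbatim in $\vy_t$. So ``cannot simulate this RNN'' does not yield ``cannot compute this function'', and Theorem~\ref{th:TC_RNN_main} gives you nothing in that direction.

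The paper instead argues directly about outputs. From the bound you already have, on any fixed input of length $n$ the per-step decoder output $\tilde{\vy}_t$ (hence the emitted symbol or value at step $t$) ranges over at most $2^n-1$ points. The paper then exhibits a concrete computable task that violates this: given a single rational $\Delta\in(0,1)$, output the sequence $0,\Delta,2\Delta,\ldots,k\Delta$ with $k=\lfloor 1/\Delta\rfloor$. For $\Delta<1/2^n$ this requires more than $2^n-1$ pairwise distinct per-step outputs, contradicting the finiteness bound. (With a single input, $\va_t=\vv_1^e$ for all $t$, so the output is even constant.) Your argument can be repaired simply by dropping the simulation detour and using your Step~1 to bound the number of distinct per-step outputs, then citing any computable task that needs more; the paper's $\Delta$-increment task is the cleanest such witness.
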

\begin{proof}[Proof Sketch]
	We confine our discussion to single-layer decoder; the case of multilayer decoder is similar. Without the residual connection, the decoder-encoder attention block produces $\va_t  =  \Att(\vp_t,\mK^e,\mV^e) = \sum_{i=1}^{n}\alpha_{i}\vv^{e}_{i}$ for some $\alpha_i$'s such that $\sum_i^{n}\alpha_{i}=1$. Note that, without residual connection $\va_t$ can take on at most $2^n-1$ values. This is because by the definition of hard attention the vector $(\alpha_1, \ldots, \alpha_n)$ is characterized by the set of zero coordinates and there are at most $2^n-1$ such sets (all coordinates cannot be zero). This restriction on the number of values on $\va_t$ holds regardless of the value of $\vp_t$. If the task requires the network to produce values of $\va_t$ that come from a set with size at least $2^n$, then the network will not be able to perform the task. Here's an example task: given a number $\Delta \in (0, 1)$, the network must produce  numbers $0, \Delta, 2\Delta, \ldots, k \Delta$, where $k$ is the maximum integer such that $k \Delta \leq 1$. If the network receives a single input $\Delta$, then it is easy to see that the vector $\va_t$ will be a constant ($\vv^{e}_1$) at any step and hence the output of the network will also be constant at all steps. Thus, the model cannot perform such a task. If the input is combined with $n-1$ auxiliary symbols (such as $\startsym$ and $\$$), then in the network, each $\va_t$ takes on at most $2^n-1$ values. Hence, the model will be incapable of performing the task if $\Delta < 1/2^n$. Such a limitation does not exist with a residual connection since the vector $\va_t = \sum_{i=1}^{n}\alpha_{i}\vv^{e}_{i} + \vp_t$ can take arbitrary number of values depending on its prior computations in $\vp_t$. For further details, see Sec.~\ref{subsec:residual} in the Appendix.

\end{proof}

\noindent\textbf{Discussion.} It is perhaps surprising that residual connection, originally proposed to assist in the learning ability of very deep networks, plays a vital role in the computational expressiveness of the network. Without it, the model is limited in its capability to make decisions based on predictions in the previous steps. We explore practical implications of this result in section \ref{sec:exp}.

\section{Experiments}\label{sec:exp}
In this section, we explore the practical implications of our results. Our experiments are geared towards answering the following questions:

\noindent \textbf{Q1.} Are there any practical implications of the limitation of Transformers without decoder-encoder residual connections? What tasks can they do  or not do compared to vanilla Transformers?

\noindent \textbf{Q2.} Is there any additional benefit of using positional masking as opposed to absolute positional encoding \cite{vaswani2017attention}?

Although we showed that Transformers without decoder-encoder residual connection are not Turing complete, it does not imply that they are incapable of 
performing all the tasks. Our results suggest that they are limited in their capability to make inferences based on their previous computations, which is required for tasks such as counting and language modeling. However, it can be shown that the model is capable of performing tasks which rely only on information provided at a given step such as copying and mapping. For such tasks, given positional information at a particular step, the model can look up the corresponding input and map it via the FFN. We evaluate these hypotheses via our experiments.

\begin{table}[th]
	\small{\centering
		\begin{tabular}{p{10em}P{5em}P{4em}}
			\toprule
			\textbf{Model} & \textbf{Copy Task}& \textbf{Counting}\\
			\midrule
			Vanilla Transformers  & 100.0 & 100.0 \\
			- Dec-Enc Residual & 99.7 & 0.0 \\
			- Dec-Dec Residual & 99.7 & 99.8\\
			
			\bottomrule
		\end{tabular}
		\caption{\label{tab:data_synthetic}BLEU scores ($\uparrow$) for copy and counting task. Please see Section \ref{sec:exp} for details }
	}
\end{table}

For our experiments on synthetic data, we consider two tasks, namely the \emph{copy task} and the \emph{counting task}. For the copy task, the goal of a model is to reproduce the input sequence. We sample sentences of lengths between 5-12 words from Penn Treebank and create a train-test split of 40k-1k with all sentences belonging to the same range of length. In the counting task, we create a very simple dataset where the model is given one number between 0 and 100 as input and its goal is to predict the next five numbers. Since only a single input is provided to the encoder, it is necessary for the decoder to be able to make inferences based on its previous predictions to perform this task. The benefit of conducting these experiments on synthetic data is that they isolate the phenomena we wish to evaluate. For both these tasks, we compare vanilla Transformer with the one without decoder-encoder residual connection. As a baseline we also consider the model without decoder-decoder residual connection, since according to our results, that connection does not influence the computational power of the model. We implement a single layer encoder-decoder network with only a single attention head in each block.

We then assess the influence of the limitation on Machine Translation which requires a model to do a combination of both mapping and inferring from computations in previous timesteps. We evaluate the models on IWSLT'14 German-English dataset and IWSLT'15 English-Vietnamese dataset. We again compare vanilla Transformer with the ones without decoder-encoder and decoder-decoder residual connection. While tuning the models, we vary the number of layers from 1 to 4, the learning rate, warmup steps and the number of heads. Specifications of the models, experimental setup, datasets and sample outputs can be found in Sec.~\ref{sec:aexp} in the Appendix.

\begin{table}[t]
	\small{\centering
		\begin{tabular}{p{10em}P{5em}P{4em}}
			\toprule
			\textbf{Model} & \textbf{De-En}& \textbf{En-Vi}\\
			\midrule
			\small{Vanilla Transformers}  & 32.9 & 28.8 \\
			- \small{Dec-Enc Residual} & 24.1 & 21.8 \\
			- \small{Dec-Dec Residual} & 30.6 & 27.2\\
			
			\bottomrule
		\end{tabular}
		\caption{\label{tab:data_eval}BLEU scores ($\uparrow$) for translation task. Please see  Section \ref{sec:exp} for details. }
	}
\end{table}

\textbf{Results} on the effect of residual connections on synthetic tasks can be found in Table \ref{tab:data_synthetic}. As per our hypothesis, all the variants are able to perfectly perform the copy task. For the counting task, the one without decoder-encoder residual connection is incapable of performing it. However, the other two including the one without decoder-decoder residual connection are able to accomplish the task by learning to make decisions based on their prior predictions. Table \ref{tab:sample_count} provides some illustrative sample outputs of the models. For the MT task, results can be found in Table \ref{tab:data_eval}. While the drop from removing decoder-encoder residual connection is significant, it is still able to perform reasonably well since the task can be largely fulfilled by mapping different words from one sentence to another.


For positional masking, our proof technique suggests that due to lack of positional encodings, the model must come up with its own mechanism to make order related decisions. Our hypothesis is that, if it is able to develop such a mechanism, it should be able to generalize to higher lengths and not overfit on the data it is provided. To evaluate this claim, we simply extend the copy task upto higher lengths. The training set remains the same as before, containing sentences of length 5-12 words. We create 5 different validation
sets each containing 1k sentences each. The	first set contains sentences within the same length	as seen in training (5-12 words), the second set contains sentences of length 13-15 words while the third, fourth and fifth sets contain sentences of lengths 15-20, 21-25 and 26-30 words respectively. We consider two models, one which is provided absolute positional encodings and one where only positional masking is applied. Figure \ref{fig:gen_syn} shows the performance of these models across various lengths. The model with positional masking clearly generalizes up to higher lengths although its performance too degrades at extreme lengths. We found that the model with absolute positional encodings during training overfits on the fact that the 13th token is always the terminal symbol. Hence, when evaluated on higher lengths it never produces a sentence of length greater than 12. Other encoding schemes such as relative positional encodings \cite{shaw-etal-2018-self,dai-etal-2019-transformer} can generalize better, since they are inherently designed to address this particular issue. However, our goal is not to propose masking as a replacement of positional encodings, rather it is to determine whether the mechanism that the model develops during training is helpful in generalizing to higher lengths. Note that, positional masking was not devised by keeping generalization or any other benefit in mind. Our claim is only that, the use of masking does not limit the model's expressiveness and it may benefit in other ways, but during practice one should explore each of the mechanisms and even a combination of both. \citet{yang2019assessing} showed that a combination of both masking and encodings is better able to learn order information as compared to explicit encodings.

\begin{figure}[t]
	\centering
	\includegraphics[scale=0.3]{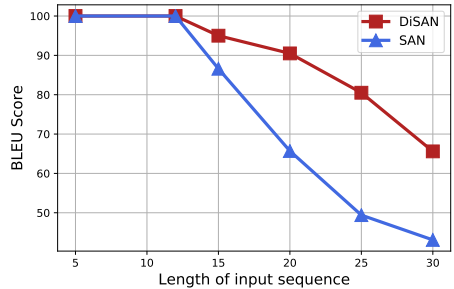}
	\caption{\label{fig:gen_syn} Performance of the two models on the copy task across varying lengths of test inputs. DiSAN refers to Transformer with only positional masking. SAN refers to vanilla Transformers.}
\end{figure}

\begin{table}[th]
	\scriptsize{\centering
		\begin{tabular}{m{13em}m{13em}}
			\toprule
			\textsc{Source} & -- 42  \\
			\textsc{Reference} & -- 43 44 45 46 47 \\
			\midrule
			{\textsc{Vanilla Transformer}} & -- 43 44 45 46 47\\
			{\textsc{ - Dec-Enc Residual}} & -- 27 27 27 27 27\\
			{\textsc{ - Dec-Dec Residual}} & -- 43 44 45 46 47\\
			\bottomrule
		\end{tabular}
		\caption{\label{tab:sample_count} Sample outputs by the models on the counting task. Without the residual connection around Decoder-Encoder block, the model is incapable of predicting more than one distinct output.}
	}
\end{table}
%

\section{Discussion and Final Remarks}

We showed that the class of languages recognized by Transformers and RNNs are exactly the same. This implies that the difference in performance of both the networks across different tasks can be attributed only to their learning abilities. In contrast to RNNs, Transformers are composed of multiple components which are not essential for their computational expressiveness. However, in practice they may play a crucial role. Recently, \citet{voita-etal-2019-analyzing} showed that the decoder-decoder attention heads in the lower layers of the decoder do play a significant role in the NMT task and suggest that they may be helping in language modeling. This indicates that components which are not essential for the computational power may play a vital role in improving the learning and generalization ability.

 \textbf{Take-Home Messages.} We showed that the order information can be provided either in the form of explicit encodings or masking without affecting computational power of Transformers. The decoder-encoder attention block plays a necessary role in conditioning the computation on the input sequence while the residual connection around it is necessary to keep track of previous computations. The feedforward network in the decoder is the only component capable of performing computations based on the input and prior computations. Our experimental results show that removing components essential for computational power inhibit the model's ability to perform certain tasks. At the same time, the components which do not play a role in the computational power may be vital to the learning ability of the network. 

Although our proofs rely on arbitrary precision, which is common practice while studying the computational power of neural networks in theory \cite{siegelmann1992computational,perez2019turing,hahn2019theoretical,yun2019transformers}, implementations in practice work over fixed precision settings. However, our construction provides a starting point to analyze Transformers under finite precision. Since RNNs can recognize all regular languages in finite precision \cite{korsky2019computational}, it follows from our construction that Transformer can also recognize a large class of regular languages in finite precision. At the same time, it does not imply that it can recognize all regular languages given the limitation due to the precision required to encode positional information. We leave the study of Transformers in finite precision for future work.

\section*{Acknowledgements}
We thank the anonymous reviewers for their constructive comments and suggestions. We would also like to thank our colleagues at Microsoft Research and Michael Hahn for their valuable feedback and helpful discussions.

\bibliography{conll}
\bibliographystyle{acl_natbib}

\clearpage

\newpage
\appendix

\section{Roadmap}
We begin with various definitions and results. We define simulation of Turing machines by RNNs and state the Turing-completeness result for RNNs. 
We define vanilla and directional Transformers and what it means for Transformers to simulate RNNs. Many of the definitions from the main paper are reproduced here, but in more detail. In Sec.~\ref{subsec:residual} we discuss
the effect of removing a residual connection on computational power of Transformers. Sec.~\ref{subsec:trans_tc} contains the proof of Turing completeness of vanilla
Transformers and Sec.~\ref{sec:directional} the corresponding proof for directional Transformers. Finally, Sec.~\ref{sec:exp} has further details of experiments. 

\section{Definitions}
Denote the set $\{1, 2, \ldots, n\}$ by $[n]$. 
Functions defined for scalars are extended to vectors in the natural way: for a function $F$ defined on a set $A$, for a sequence $(a_1, \ldots, a_n)$ of elements in $A$, we set $F(a_1, \ldots, a_n) := (F(a_1), \ldots, F(a_n))$. Indicator $\indicator(P)$ is $1$, if predicate $P$ is true and is $0$ otherwise. 
For a sequence $\mX = (\vx_{n'},\ldots, \vx_n)$ for some $n' \geq 0$, we set $\mX_j := (\vx_{n'},\ldots, \vx_j)$ for $j \in \{n', i+1, \ldots, n\}$.
We will work with an alphabet $\Sigma = \{\symb_1, \ldots, \symb_\alphabetsize\}$, with $\symb_1 = \#$ and $\symb_\alphabetsize = \$ $.
The special symbols $\#$ and $\$$ correspond to the beginning and end of the input sequence, resp. 
For a vector $\vv$, by $\vzero_\vv$ 
we mean the all-$0$ vector of the same dimension as $\vv$. 
Let $\bt := \min\{t, n\}$

\subsection{RNNs and Turing-completeness}\label{subsec:ssrnn}
Here we summarize, somewhat informally, the Turing-completeness result for RNNs due to \cite{siegelmann1992computational}. 
We recall basic notions from computability theory.
In the main paper, for simplicity we stated the results for \emph{total recursive} functions $\phi: \{0, 1\}^* \to \{0, 1\}^*$, i.e. a function that 
is defined on every $s \in \{0, 1\}^*$ and whose values can be computed by a Turing machine. While total recursive functions form a satisfactory formalization of seq-to-seq tasks, here we state the more general result for 
\emph{partial recursive functions}. Let $\phi: \{0, 1\}^* \to \{0, 1\}^*$ be partial recursive. A partial recursive function is one that need not be defined
for every $s \in \{0, 1\}^*$, and there exists a Turing Machine $\mathcal{M}$ with the following property. 
The input $s$ is initially written on the tape of the Turing Machine $\mathcal{M}$ and the output $\phi(s)$ is the content of the tape upon acceptance which is indicated by halting in a designated accept state. On $s$ for which $\phi$ is undefined, $\mathcal{M}$ does not halt.

We now specify how Turing machine $\mathcal{M}$ is simulated by RNN $R(\mathcal{M})$. 
In the RNNs in \cite{siegelmann1992computational} 
the hidden state $\vh_{t}$ has the form
\[
\begin{array}{rcllr}
\vh_t & = && [  \vq_t, \stack_1, \stack_2 ], 
\end{array}
\]
where $\vq_{t} = [q_1, \ldots, q_s]$ denotes the state of $\mathcal{M}$ one-hot form. Numbers $\stack_1, \stack_2 \in \mbQ$, called stacks, 
store the contents of the tape in a certain Cantor set like encoding (which is similar to, but slightly more involved, than binary representation)
at each step. 
The simulating RNN $R(\mathcal{M})$, gets as input encodings of $s_1 s_2...s_n$ in the first $n$ steps, and from then on receives the vector $\vzero$ as input in each step. If $\phi$ is defined on $s$, then $\mathcal{M}$ halts and accepts with the output $\phi(s)$ the content of the tape. 
In this case, $R(\mathcal{M})$ enters a special accept state, and $\stack_1$ encodes $\phi(s)$ and $\stack_2 = 0$. If $\mathcal{M}$ does not halt then
$R(\mathcal{M})$ also does not enter the accept state. 

\citet{siegelmann1992computational} further show that from $R(\mathcal{M})$ one can further explicitly produce the $\phi(s)$ as its output. In the present paper, we will not deal with explicit production of the output but rather work with the definition of simulation in the previous paragraph. This is
for simplicity of exposition, and the main ideas are already contained in our results. 
 If the Turing machine computes $\phi(s)$ in time $T(s)$, the simulation takes $O(|s|)$ time to encode the input sequence $s$ and $4T(s)$ to compute $\phi(s)$.

\begin{theorem}[\cite{siegelmann1992computational}]\label{th:TC_RNN}
	Given any partial recursive function $\phi: \{0,1\}^{*} \to \{0,1\}^{*}$ computed by Turing machine $\mathcal{M}_\phi$, 
	there exists a simulating RNN $R(\mathcal{M}_\phi)$. 
\end{theorem}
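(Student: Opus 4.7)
The plan is to construct, for each Turing machine $\mathcal{M}_\phi$, an RNN $R(\mathcal{M}_\phi)$ whose hidden state has the form $\vh_t = [\vq_t, \stacka_t, \stackb_t]$, where $\vq_t$ is a one-hot vector over the states of $\mathcal{M}_\phi$ and $\stacka_t, \stackb_t \in \mbQ$ encode the portions of the tape to the left and to the right of the head. The RNN operates in two phases, selected by a dedicated phase coordinate of $\vh_t$: an input-loading phase in which the symbols $s_1, \ldots, s_n$ are pushed one by one onto $\stacka$, and a computation phase in which the RNN receives the zero input $\BaseEmbedding(\termsym)$ and applies a fixed recurrence that mirrors one transition of $\mathcal{M}_\phi$ per step. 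Overall correctness then follows by induction on the number of simulated transitions, once the single-step update is in place, with the accept coordinate of $\vq_t$ lighting up exactly when $\mathcal{M}_\phi$ accepts.

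The technical heart is the stack encoding. I would encode a tape word $w = w_1 \cdots w_k \in \{0,1\}^*$ by $\stack(w) = \sum_{i=1}^{k} (2 w_i + 1)/4^i$, which uses only digits $1$ and $3$ in base $4$ and therefore places all reachable stack values in a Cantor-like subset of $[0,1]$ with a uniform safety gap around $1/2$: any stack with top $0$ lies in $[0,1/2)$ while any stack with top $1$ lies in $[3/4,1]$. Under this encoding, pushing a symbol $b \in \{0,1\}$, reading the top, and popping the top become piecewise-linear maps $\text{push}_b(x) = (x + 2b + 1)/4$, $\text{top}(x) = \indicator(x \geq 1/2)$, and $\text{pop}(x) = 4x - (2\,\text{top}(x) + 1)$, each realizable by a constant-size subnetwork with the saturated linear activation $\sigmoid$; in particular, $\indicator(x \geq 1/2) = \sigmoid(2(x - 1/2) + 1/2)$ is an exact equality on the reachable set, thanks to the $1/4$ gap.

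Given these primitives, the transition function $\delta(q, a) = (q', a', \text{move})$ of $\mathcal{M}_\phi$ becomes a sum of branches indexed by pairs $(q, a)$, each gated by the branch indicator $g_{q,a} = \vq_t[q] \cdot \indicator(\text{top}(\stackb_t) = a) \in \{0,1\}$ and producing the appropriate state update, push, and pop on $(\vq_t, \stacka_t, \stackb_t)$. The main obstacle is that forming each branch's contribution requires multiplying a variable rational (a stack value) by a bit (the branch indicator), whereas the RNN update $g(\cdot)$ is only allowed to compose affine maps with $\sigmoid$. This is fixed by the identity $b \cdot x = \sigmoid(x + M(b-1)) - \sigmoid(-x + M(b-1))$, valid for $b \in \{0,1\}$ whenever $|x|$ is bounded and $M$ is a sufficiently large rational; since only finitely many branches appear and all stack values lie in $[0,1]$, a single $M$ works uniformly. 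Packaging the top-indicator extraction, the gate formation, the branch sum, and the phase-dependent selection between loading and stepping into one constant-depth feedforward network yields the required $g(\cdot)$, and the inductive invariant that $\vh_t$ encodes the configuration of $\mathcal{M}_\phi$ after $t$ simulated transitions completes the proof for every partial recursive $\phi$.
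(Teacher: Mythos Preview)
The paper does not give its own proof of this theorem: it is stated as a cited result of \cite{siegelmann1992computational}, and the surrounding text in Sec.~\ref{subsec:ssrnn} only summarizes the construction informally (hidden state $[\vq_t,\stacka,\stackb]$ with one-hot machine state and two Cantor-$4$ stack encodings, an input-loading phase followed by a simulation phase). Your sketch reproduces exactly this classical construction, including the base-$4$ encoding $\stack(w)=\sum_i(2w_i+1)/4^i$, the affine push/pop, and the branch-gated transition update, so the approach is the same as what the paper attributes to Siegelmann--Sontag.

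One concrete slip: your top-extraction formula $\sigmoid(2(x-1/2)+1/2)$ is not exact on the reachable set. With top $0$ the stack lies in $[1/4,1/2)$, so $2x-1/2\in[0,1/2)$ and $\sigmoid$ returns a value in $[0,1/2)$ rather than $0$. The gap between the top-$0$ region $[1/4,1/2)$ and the top-$1$ region $[3/4,1)$ has width $1/4$, so the affine map must have slope at least $4$; the correct gadget (as in Siegelmann--Sontag, and as the paper's informal description implicitly uses) is $\mathrm{top}(x)=\sigmoid(4x-2)$. With that fix, the rest of your argument---the bit-times-rational product via $\sigmoid(x+M(b-1))-\sigmoid(-x+M(b-1))$, the per-branch gating $g_{q,a}=\sigmoid(\vq_t[q]+\indicator(\mathrm{top}=a)-1)$, and the inductive configuration invariant---goes through and yields a valid proof of the cited theorem.
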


In view of the above theorem, for establishing Turing-completeness of Transformers, it suffices to show that RNNs can be simulated by Transformers. 
Thus, in the sequel we will only talk about simulating RNNs.

\subsection{Vanilla Transformer Architecture}
Here we describe the original transformer architecture due to \cite{vaswani2017attention} as formalized by \cite{perez2019turing}. While our notation and definitions largely follow \cite{perez2019turing}, they are not identical. The transformer here makes use of positional encoding; later we will discuss
the transformer variant using directional attention but without using positional encoding. 

The transformer, denoted $\Trans$, is a sequence-to-sequence architecture. Its input consists of (i) a sequence $\mX = (\vx_1,\ldots, \vx_n)$  of vectors in $\bq^d$, (ii) a seed vector $\vy_0 \in \bq^d$.  
The output is a sequence $\mY=(\vy_1,\ldots, \vy_r)$ of vectors in $\bq^d$. 
The sequence $\mX$ is obtained from the sequence $(s_0, \ldots, s_n) \in  \Sigma^{n+1}$ of symbols by using the embedding mentioned earlier: 
$\vx_i = f(\BaseEmbedding(s_i), \pos(i))$ for $0 \leq i \leq n$.
The transformer consists of composition of \emph{transformer encoder} and a \emph{transformer decoder}. The transformer encoder is obtained by composing one or more \emph{single-layer encoders} and similarly the transformer decoder is obtained by composing one or more \emph{single-layer decoders}. 
For the feed-forward networks in the transformer layers we use the activation as in \cite{siegelmann1992computational}, namely the saturated linear activation function:
\begin{equation}
\sigmoid(x) = \left\{  
\begin{array}{cc}
0 & \quad \text{if  } x < 0, \\
x & \quad \text{if  } 0 \leq x \leq 1, \\
1 & \quad \text{if  } x > 1.
\end{array}
\right.
\end{equation}
As mentioned in the main paper, we can easily work with the standard $\mathsf{ReLU}$ activation via $\sigmoid(x) = \mathsf{ReLU}(x) - \mathsf{ReLU}(x-1)$.
In the following, after defining these components, we will put them together to specify the full transformer architecture. But we begin with self-attention mechanism which is the central feature of the transformer.

\paragraph{Self-attention.} The self-attention mechanism takes as input (i) a \emph{query} vector
$\vq$, (ii) a sequence of \emph{key} vectors $\mK = (\vk_1, \ldots , \vk_n)$, and (iii) a sequence of \emph{value} vectors $\mV = (\vv_1, \ldots , \vv_n)$. All vectors are in $\bq^d$. 

The $\vq$-attention over keys $\mK$ and values $\mV$, denoted by $\Att(\vq, \mK, \mV)$, is a vector $\va$ given by

\begin{eqnarray*}
(\alpha_1, \ldots , \alpha_n) &=& \rho (\fatt(\vq, \vk_1), \ldots , \fatt(\vq, \vk_n)), \\
\va &=& \alpha_1\vv_1 + \alpha_2\vv_2 + \cdots +\alpha_n\vv_n. 
\end{eqnarray*}

The above definition uses two functions $\rho$ and $f^\att$ which we now describe.
For the normalization function $\rho: \mbQ^n \to \mbQ_{\geq 0}^n$ we will use $\hardmax$: for $\vx = (x_1, \ldots, x_n) \in \mbQ^n$, if the maximum value occurs $r$ times among  $x_1, \ldots, x_n$, then 
$\hardmax(\vx)_i := 1/r$ if $x_i$ is a maximum value and $\hardmax(\vx)_i := 0$ otherwise. In practice, the $\softmax$ is often used but its output values are in general not rational. The names soft-attention and hard-attention are used for the attention mechanism depending on which normalization function is used.

For the Turing-completeness proof of vanilla transformers, the scoring function $f^\att$ used is a combination of multiplicative attention \cite{vaswani2017attention} and a non-linear function: $\fatt(\vq, \vk_i) = -\left | \<\vq, \vk_i \> \right |$. For directional transformers, the standard multiplicative attention is used, that is, $\fatt(\vq, \vk_i) =  \< \vq, \vk_i \>$.

\paragraph{Transformer encoder.} 
A \emph{single-layer encoder} is a function $\Enc(\mX; \btheta)$, where $\btheta$ is the parameter vector and 
the input $\mX = (\vx_1, \ldots, \vx_n)$ is a sequence of vector in $\mbQ^d$. The output is another sequence $\mZ = (\vz_1, \ldots, \vz_n)$ of vectors
in $\mbQ^d$. The parameters $\btheta$ specify functions $Q(\cdot), K(\cdot), V(\cdot)$, and $O(\cdot)$, all of type $\mbQ^d \to \mbQ^d$. The functions
$Q(\cdot), K(\cdot),$ and $ V(\cdot)$ are usually linear transformations and this will be the case in our constructions:  
\begin{equation*}
\begin{split}
Q(x_i) = \vx_i^T W_{Q}, \\
K(x_i) = \vx_i^T W_{K}, \\
V(x_i) = \vx_i^T W_{V},
\end{split}
\end{equation*}
where $W_{Q}, W_{K}, W_{V} \in \mbQ^{d \times d}$. The function $O(\cdot)$ is a feed-forward network.
The single-layer 
encoder is then defined by 
\begin{eqnarray}
\va_i &=& \Att(Q(\vx_i), K(\mX), V(\mX)) + \vx_i, \label{eq:enc-enc-att-trans} \\
\vz_i &=& O(\va_i) + \va_i. \nonumber 
\end{eqnarray}
The addition operations $+\vx_i$ and $+\va_i$ are the  residual connections. 
The operation in \eqref{eq:enc-enc-att-trans} is called the encoder-encoder attention block. 

The complete $L$-layer transformer encoder $\TEnc^{\brac{L}}(\mX; \btheta)$ has the same input $\mX = (\vx_1, \ldots, \vx_n)$ as the single-layer encoder. 
By contrast, its output consists of two sequences $(\mK^e, \mV^e)$, each a sequence of $n$ vectors in $\mbQ^d$. The encoder
$\TEnc^{\brac{L}}(\cdot)$ is obtained by repeated application of single-layer encoders, each with its own parameters; and at the end, two trasformation functions $K^L(\cdot)$ and $V^L(\cdot)$ are applied to the sequence of output vectors at the last layer. Functions $K^{\brac{L}}(\cdot)$ and 
$V^{\brac{L}}(\cdot)$ are linear transformations in our constructions. Formally, for $1\leq \ell \leq L-1$ and $\mX^1 := \mX $, we have
\begin{eqnarray*}
\mX^{\ell+1}& =& \Enc(\mX^{\ell}; \btheta_\ell), \\ 
\mK^e &=& K^{\brac{L}}(\mX^L), \\
\mV^e &=& V^{\brac{L}}(\mX^L).    
\end{eqnarray*}
The output of the $L$-layer Transformer encoder $(\mK^e, \mV^e) = \TEnc^{\brac{L}}(\mX)$ is fed to the Transformer decoder which we describe next.


\paragraph{Transformer decoder.} 

%

The input to a \emph{single-layer decoder}  is (i) $(\mK^e, \mV^e)$, the sequences of key and value vectors output by the encoder, and (ii) a sequence $\mY = (\vy_1, \ldots, \vy_k)$ of vectors in $\mbQ^d$. The output is another sequence $\mZ = (\vz_1, \ldots, \vz_k)$ of vectors in $\mbQ^d$. 

Similar to the single-layer encoder, a single-layer decoder is parameterized by functions $Q(\cdot), K(\cdot), V(\cdot)$ and $O(\cdot)$ and is defined by
\begin{eqnarray}
\vp_t & = & \Att(Q(\vy_t),K(\mY_t),V(\mY_t)) 
+ \vy_t, \label{eq:dec-dec}\\
\va_t & = & \Att(\vp_t,\mK^e,\mV^e) 
+ \vp_t, \label{eq:enc-dec} \\
\vz_t & = & O(\va_t) + \va_t. \nonumber
\end{eqnarray}

The operation in \eqref{eq:dec-dec} will be referred to as the \emph{decoder-decoder attention} block and the operation in \eqref{eq:enc-dec} as the
\emph{decoder-encoder attention} block. In the decoder-decoder attention block, positional masking is applied to prevent the network from attending over symbols which are ahead of them.

An $L$-layer Transformer decoder is obtained by repeated application of $L$ single-layer decoders each with its own parameters and a transformation function $F : \bq^d \rightarrow \bq^d$ applied to the last vector in the sequence of vectors output by the final decoder. 
Formally, for $1 \leq \ell \leq L-1$ and $\mY^1 = \mY$ we have
\begin{eqnarray*}
\mY^{\ell+1} &= \Dec((\mK^e, \mV^e), \mY^\ell; \btheta_\ell), \\ 
\vz &= F(\vy_{t}^L). 
\end{eqnarray*}
We use $\vz = \TDec^L((\mK^e, \mV^e), \mY; \btheta)$ to denote an $L$-layer Transformer decoder.
Note that while the output of a single-layer decoder is a sequence of vectors, the output of an $L$-layer Transformer decoder is a single vector.


\paragraph{The complete Transformer.}
A {\em Transformer network} receives an input sequence $\mX$, a seed
vector $\vy_0$, and $r\in \N$. For $t \geq 0$ its output is a sequence $\mY=(\vy_1,\ldots, \vy_r)$ defined by
\begin{equation*}
\tilde{\vy}_{t+1}  =  \TDec\left(\TEnc(\mX), (\vy_0,\vy_1,\ldots,\vy_{t})\right).
\end{equation*}
We get $\vy_{t+1}$ by adding positional encoding: $\vy_{t+1} = \tilde{\vy}_{t+1} + \pos(t+1)$. 
We denote the complete Transformer by $\Trans(\mX,\vy_0) = \mY$. The Transformer ``halts'' when $\vy_T \in H$, where $H$ is a prespecified halting set. 

\paragraph{Simulation of RNNs by Transformers.}
We say that a Transformer simulates an RNN (as defined in Sec.~\ref{subsec:ssrnn}) if on input $s \in \Sigma^*$, at each step $t$, the vector 
$\vy_{t}$ contains the hidden state $\vh_t$ as a subvector: $\vy_{t} = [\vh_t, \cdot]$, and halts at the same step as RNN. 

%
%
%
%
%
%


\section{Results on Vanilla Transformers}\label{sec:vanilla}

\subsection{Residual Connections}\label{subsec:residual}

\begin{proposition}
	The Transformer without residual connection around the Decoder-Encoder Attention block in the Decoder is not Turing Complete
\end{proposition}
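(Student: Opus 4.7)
The plan is to exhibit a Turing-computable task that no Transformer without the decoder-encoder residual connection can carry out, thereby establishing non-Turing-completeness by counterexample. The engine of the argument is a pigeonhole bound on the image of the decoder-encoder attention block once its residual is removed.

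First I would restrict attention to a single-layer decoder; the multi-layer case reduces to iterated applications of the same bound. Without the residual, the block produces
\begin{equation*}
\va_t \;=\; \Att(\vp_t, \mK^e, \mV^e) \;=\; \sum_{i=1}^{n} \alpha_i\, \vv^{e}_{i},
\end{equation*}
where $(\alpha_1,\ldots,\alpha_n) = \hardmax(\fatt(\vp_t, \vk^{e}_{1}), \ldots, \fatt(\vp_t, \vk^{e}_{n}))$. Since $\hardmax$ outputs $\tfrac{1}{|S|}\mathbf{1}_S$ for some nonempty $S \subseteq \{1,\ldots,n\}$, the coefficient vector lies in a set of size at most $2^{n}-1$, and hence so does $\va_t$, regardless of the query $\vp_t$. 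Propagating through the remaining deterministic operations $\vz_t = O(\va_t) + \va_t$ and $\tilde{\vy}_{t+1} = F(\vz_t)$, the decoder's pre-positional-encoding output $\tilde{\vy}_{t+1}$ takes at most $2^{n}-1$ distinct values across all steps.

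Next I would construct a separating task. Given a rational $\Delta \in (0,1)$ encoded as the single content symbol of a length-$n$ input padded with auxiliary markers such as $\startsym$ and $\termsym$, the network must output the arithmetic progression $0, \Delta, 2\Delta, \ldots, k\Delta$ with $k = \lfloor 1/\Delta \rfloor$. This is trivially Turing-computable. The required output contains $k+1$ pairwise distinct values, so choosing $\Delta < 2^{-n}$ forces $k+1 > 2^{n}$; by the cardinality bound above, the Transformer can produce at most $2^{n}-1$ distinct $\tilde{\vy}_t$'s, and pigeonhole yields a required output that cannot be realized, contradicting task correctness. Contrast this with the full architecture: with the residual present, $\va_t = \sum_{i} \alpha_i \vv^{e}_{i} + \vp_t$, and since $\vp_t$ accumulates history from the decoder's past predictions, $\va_t$ is no longer confined to a finite image, as Theorem~\ref{th:TC_trans_informal} already exploits.

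The main obstacle is rigorously ruling out alternative information channels in multi-layer decoders: each additional decoder layer reintroduces its own decoder-encoder attention block, which I would handle by iterating the cardinality bound so that the image of $\tilde{\vy}_{t+1}$ remains a finite set whose size depends only on $L$ and $n$, still violated by taking $\Delta$ small enough. A secondary subtlety is the interaction with positional encoding added before $\tilde{\vy}_{t+1}$ becomes the next decoder input; this is addressed by phrasing the task at the level of the pre-positional-encoding prediction, where the task-relevant output lives.
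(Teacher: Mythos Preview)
Your proposal is correct and follows essentially the same argument as the paper: the $2^n-1$ pigeonhole bound on $\hardmax$ configurations, the same arithmetic-progression separating task with $\Delta < 2^{-n}$, and the reduction of the multilayer case to the single-layer bound. The paper's multilayer step is in fact slightly sharper---it observes that the \emph{last} layer's $\va_t^{(L)}$ already has at most $2^n-1$ values independent of $L$, so no iteration is needed---but your weaker ``finite, depending on $L$ and $n$'' bound suffices and leads to the same conclusion.
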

\begin{proof}
	Recall that the vectors $\va_t$ is produced from the Encoder-Decoder Attention block in the following way,
	\[
	\va_t  =  \Att(\vp_t,\mK^e,\mV^e) + \vp_t 
	\]

The result follows from the observation that without the residual connections, $\va_t  =  \Att(\vp_t,\mK^e,\mV^e)$, which leads to $\va_t = \sum_{i=1}^{n}\alpha_{i}\vv^{e}_{i}$ for some $\alpha_i$s such that $\sum_i^{n}\alpha_{i}=1$. Since $\vv^{e}_{i}$ is produced from the encoder, the vector $\va_t$ will have no information about its previous hidden state values. Since the previous hidden state information was computed and stored in $\vp_t$, without the residual connection, the information in $\va_t$ depends solely on the output of the encoder. 

One could argue that since the attention weights $\alpha_i$s depend on the query vector $\vp_t$, it could still use it gain the necessary information from the vectors $\vv_i^{e}$s. However, note that by definition of hard attention, the attention weights $\alpha_{i}$ in $\va_t = \sum_{i=1}^{n}\alpha_{i}\vv^{e}_{i}$ can either be zero or some nonzero value depending on the attention logits. Since the attention weights $\alpha_i$ are such that $\sum_i^{n}\alpha_{i}=1$ and all the nonzero weights are equal to each other.  Thus given the constraints there are $2^{n}-1$ ways to attend over $n$ inputs excluding the case where no input is attended over. Hence, the network without decoder-encoder residual connection with $n$ inputs can have at most $2^{n}-1$ distinct $\va_t$ values. This implies that the model will be unable to perform a task that takes $n$ inputs and has to produce more than $2^{n}-1$ outputs. Note that, such a limitation will not exist with a residual connection since the vector $\va_t = \Sigma_{i=1}^{n}\alpha_{i}\vv^{e}_{i} + \vp_t$ can take arbitrary number of values depending on its prior computations in $\vp_t$.

 As an example to illustrate the limitation, consider the following simple problem, given a value $\Delta$, where $0 \leq \Delta \leq 1$, the network must produce the values  $0, \Delta, 2\Delta, \ldots, k \Delta$, where $k$ is the maximum integer such that $k \Delta \leq 1$. If the network receives a single input $\Delta$, the encoder will produce only one particular output vector and regardless of what the value of the query vector $\vp_t$ is, the vector $\va_t$ will be constant at every timestep. Since $\va_{t}$ is fed to feedforward network which maps it to $\vz_t$, the output of the decoder will remain the same at every timestep and it cannot produce distinct values.  If the input is combined with $n-1$ auxiliary symbols (such as $\startsym$ and $\$$), then the network can only produce $2^n-1$ outputs. Hence, the model will be incapable of performing the task if $\Delta < 1/2^n$ .
 
 Thus the model cannot perform the task defined above which RNNs and Vanilla Transformers can easily do with a simple counting mechanism via their recurrent connection.

 For the case of \textbf{multilayer decoder}, consider any $L$ layer decoder model. If the residual connection is removed, the output of decoder-encoder attention block at each layer is  $\va_{t}^{\brac{\ell}} = \sum_{i=1}^{n}\alpha_{i}^{\brac{\ell}}\vv^{e}_{i}$ for $1 \leq \ell \leq L$. Observe, that since output of the decoder-encoder attention block in the last ($L$-th) layer of the decoder is $\va_{t}^{\brac{L}} = \sum_{i=1}^{n}\alpha_{i}^{\brac{L}}\vv^{e}_{i}$. Since the output of the $L$ layer decoder will be a feedforward network over $\va_{t}^{\brac{L}}$, the computation reduces to the single layer decoder case. Hence, similar to the single layer case, if the task requires the network to produce values of $\va_t$ that come from a set with size at least $2^n$, then the network will not be able to perform the task.

This implies that the model without decoder-encoder residual connection is limited in its capability to perform tasks which requires it to make inferences based on previously generated outputs. 

\end{proof}

\subsection{Simulation of RNNs by Transformers with positional encoding}\label{subsec:trans_tc}

\begin{theorem}\label{th:TC_trans}
RNNs can be simulated by vanilla Transformers and hence the class of vanilla Transformers is Turing-complete.
\end{theorem}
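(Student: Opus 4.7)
The plan is to reduce this to Theorem~\ref{th:TC_RNN_main} by showing that any RNN as defined in Section~\ref{subsec:def_rnn} can be simulated by a vanilla Transformer with one encoder layer and one decoder layer, each with a single attention head. Since the RNN is specified by $\vh_t = g(\mW_h \vh_{t-1} + \mW_x \vx_t + \vb)$ with $g$ an FFN whose activation is the saturated linear $\sigmoid$, I will arrange for the decoder output at step $t$ to have the form $\vy_t = [\vh_t, \vzero, \vzero, t+1, 1]$, so that $\vh_t$ sits as an identifiable sub-vector and the halting set $H$ can be read off from that block.

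First I would choose an input embedding of the form $\vx_i = [\vzero_{d_h}, \BaseEmbedding(s_i), \vzero_{d_h}, i, 1]$, reserving coordinate blocks for (i) the hidden state, (ii) the base symbol embedding, (iii) scratch space, and (iv) the positional index $i$ together with a constant $1$. The positional encoding contributes only to the last two coordinates. I would then use a single encoder layer in which the self-attention is trivialized by setting the value matrix to zero and relying on the residual connection to yield the identity $\vz_i = \vx_i$; the final linear maps $K^{\brac{1}}, V^{\brac{1}}$ are chosen so that $\vk_i = [\vzero, \vzero, \vzero, -1, i]$ and $\vv_i = [\vzero, \BaseEmbedding(s_i), \vzero, 0, 0]$. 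Thus keys carry only positional information while values carry only symbol content, which is exactly what the retrieval step in the decoder will need.

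The heart of the argument is an induction on $t$ with base case $\vy_0 = [\vzero, \vzero, \vzero, 1, 1]$. Assuming $\vy_t$ has the claimed form, I would again trivialize the decoder-decoder attention block (zero value vectors plus residual connection) so that $\vp_t = \vy_t$. Using the scoring function $\fatt(\vq, \vk) = -|\langle \vq, \vk\rangle|$ of Remark~(iii), the inner product $\langle \vp_t, \vk_i\rangle = i - (t+1)$, so for $t < n$ the hardmax selects exactly the index $i = t+1$, and the decoder-encoder attention retrieves $\vv_{t+1}$, i.e. the base embedding of $s_{t+1}$. After the residual connection, $\va_t$ contains both $\vh_t$ (from $\vp_t$) and $\BaseEmbedding(s_{t+1})$ (from the attention) in their respective blocks, together with bookkeeping entries. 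I would then design $O^{\mathrm{dec}}$ as a two-stage FFN that (a) cancels the bookkeeping and scratch entries via the residual connection and (b) implements in the hidden-state block exactly the RNN update $g(\mW_h \vh_t + \mW_x \BaseEmbedding(s_{t+1}) + \vb)$; the existence of such a composite FFN with $\sigmoid$ activations is the content of Lemma~\ref{lem:rnn_trans_comp}. Adding $\pos(t+1)$ externally then yields $\vy_{t+1} = [\vh_{t+1}, \vzero, \vzero, t+2, 1]$, closing the induction.

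The main obstacle is the case $t \geq n$: the RNN continues to be fed $\BaseEmbedding(\$)$ indefinitely until it halts, but the Transformer only sees the input once. The key observation is that the scoring function $-|\langle \vp_t, \vk_i\rangle| = -|i - (t+1)|$ saturates at $i = n$ once $t+1 \geq n$, so the decoder-encoder attention automatically ``freezes'' on the last input symbol and repeatedly supplies $\BaseEmbedding(\$)$ to $O^{\mathrm{dec}}$, matching the RNN's post-input behavior exactly (provided the input is appended with a single $\$$). The halting set $H \subset \mbQ^d$ is then taken to be any vectors whose hidden-state block lies in the RNN's accept configuration; because that block is always a sub-vector of $\vy_t$, this is well defined. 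Combining these pieces yields a Transformer that simulates the RNN step-for-step, and Turing-completeness follows from Theorem~\ref{th:TC_RNN_main}.
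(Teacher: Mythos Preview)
Your proposal is correct and follows essentially the same approach as the paper: the same one-layer encoder/decoder with trivialized self-attentions, the same block-structured embedding $\vx_i = [\vzero_{d_h}, \BaseEmbedding(s_i), \vzero_{d_h}, i, 1]$, the same choice of $\vk_i = [\vzero,\vzero,\vzero,-1,i]$ and $\vv_i = [\vzero,\BaseEmbedding(s_i),\vzero,0,0]$, the same inductive invariant $\vy_t = [\vh_t,\vzero,\vzero,t+1,1]$, the same retrieval via $-|i-(t+1)|$ saturating at $i=n$ for $t\geq n$, and the same appeal to Lemma~\ref{lem:rnn_trans_comp} for $O^{\mathrm{dec}}$. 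There is nothing to add.
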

\begin{proof}		
	The construction of the simulating transformer is simple: it uses a single head and both the encoder and decoder have one layer. Moreover, the encoder does very little and most of the action happens in the decoder. The main task for the simulation is to design the input embedding (building on the given
	base embedding $\BaseEmbedding$), the feedforward network $O(\cdot)$ and the matrices corresponding to functions $Q(\cdot), K(\cdot), V(\cdot)$.
	
	\paragraph{Input embedding.} The input embedding is obtained by summing the symbol and positional encodings which we next describe. 
	These encodings have dimension $d= 2d_h + d_b + 2 $, where $d_h$ is the dimension of the hidden state of the RNN and $d_b$ is the dimension of 
	the given encoding $\BaseEmbedding$ of the input symbols. 
	We will use the symbol encoding $f^{\mrsymb}:\Sigma \to \mbQ^d$ which is essentially the same as $\BaseEmbedding$ except that the dimension is now larger:
	\begin{equation*}
	f^\mrsymb(s)  =  [  \vzero_{d_h}, f_e(s); \; \vzero_{d_h}, 0, 0 ].
	\end{equation*}
	The positional encoding $\pos: \bn \to \bq^d$ is simply
	\begin{equation*}
	\pos(i) = [\vzero_{d_h}, \vzero_{d_b}, \vzero_{d_h}, i, 1 ].
	\end{equation*}
	Together, these define the combined embedding $f$ for a given input sequence $s_0 s_1\cdots s_n \in \Sigma^{*} $ by
	\begin{equation*}
	f(s_i) = f^\mrsymb(s_i) + \pos(i) = [  \vzero_{d_h}, \BaseEmbedding(s_i), \vzero_{d_h}, i, 1 ].
	\end{equation*}
	The vectors $\vv \in \bq^{d}$ used in the computation of our transformer are of the form
	\begin{equation*}
	\vv  =  [  \vh_1, \vs; \; \vh_2, x_1, x_2 ],
	\end{equation*}		
	%
	where $\vh_1, \vh_2 \in \bq^{d_h}, \vs \in \bq^{d_e}, \text{ and } x_1, x_2 \in \bq$. The coordinates corresponding to the $\vh_i$'s are reserved for computation related to hidden states of the $\RNN$, the coordinates corresponding to $\vs$ are reserved for base embeddings, and those for 
	$x_1$ and $x_2$ are reserved for scalar values related to positional operations. The first two blocks, corresponding to $\vh_1$ and $\vs$
	are reserved for computation of the RNN.

	During the computation of the Transformer, the underlying RNN will get the input $\vs_{\bt}$ at step $t$ for $t = 0, 1, \ldots$, where recall that
	$\bt = \min\{t, n\}.$ This sequence leads to the RNN getting the embedding of the input sequence $s_0, \ldots, s_n$ in the first $n+1$ steps followed by the embedding of the symbol $\$$ for the subsequent steps, which is in accordance with the requirements of \cite{siegelmann1992computational}.
%
%
%
%
	Similar to \cite{perez2019turing} we use the following scoring function in the attention mechanism in our construction,
	
	\begin{equation}\label{eq:att_score_trans}
	f^{\att}(\vq_i, \vk_j) = -|\<\vq_i, \vk_j\>|
	\end{equation}
	
	\paragraph{Construction of $\TEnc$.} As previously mentioned, our transformer encoder has only one layer, and the computation in the encoder is
	very simple: the attention mechanism is not utilized, only the residual connections are. This is done by setting the matrix for $V(\cdot)$ to 
	the all-zeros matrix, and the feedforward networks to always output $\vzero$. The application of appropriately chosen linear transformations
	for the final $K(\cdot)$ and $V(\cdot)$ give the following lemma about the output of the encoder. 
	
	
	\begin{lemma}\label{lem:encoder}
		There exists a single layer encoder denoted by $\TEnc$ that takes as input the sequence $(\vx_1, \ldots, \vx_n, \$)$ and generates the tuple $(\mK^e, \mV^e)$ where $\mK^e = (\vk_1, \ldots, \vk_n) $ and $\mV^e = (\vv_1, \ldots, \vv_n)$ such that,
		\begin{eqnarray*}
		\vk_i  &=&  [ \vzh, \vzs;\;    \vzh,-1, i ],\\
		\vv_i  &=&  [  \vzh, \vs_i ;\; \vzh, 0, 0 ].
	\end{eqnarray*}
		\end{lemma}

	\paragraph{Construction of $\TDec$.} As in the construction of $\TEnc$, our $\TDec$ has only one layer. Also like $\TEnc$, the decoder-decoder attention block just computes the identity: we set $V^{\brac{1}}(\cdot)= \vzero$ identically, and use the residual connection so that $\vp_t = \vy_t$.
	
	For $t \geq 0$, at the $t$-th step we denote the input to the decoder as $\vy_t = \Tilde{\vy}_t + \pos(t)$. 
	Let $\vh_0 = \vzh \text{ and }\Tilde{\vy}_0 = \vzero$. We will show by induction that at the $t$-th timestep 
	we have

	\begin{equation}\label{eqn:induction}
	\vy_t  =  [  \vh_{t}, \vzs;\;  \vzh,t+1, 1 ].
	\end{equation}
	
	By construction, this is true for $t=0$:
	\begin{equation*}
	\vy_0 =  [  \vzh, \vzs;\;    \vzh,1, 1 ].
	\end{equation*}
	Assuming that it holds for $t$, we show it for $t+1$.
		
	By Lemma~\ref{lem:dec-enc-attention}  
	\begin{equation}
	\Att(\vp_t, \mK^e, \mV^e)  =  [  \vzh, \vv_{\overline{t+1}};\; \vzh,0, 0 ].
	\end{equation}

	Lemma~\ref{lem:dec-enc-attention} basically shows how we retrieve the input $\vs_{\overline{t+1}}$ at the relevant step for further computation in the decoder. It follows that
	\begin{eqnarray*}
		\va_t &= \Att(\vp_t, \mK^e, \mV^e) + \vp_t\\
		&=  [ \vh_{t}, \vs_{\overline{t+1}},    \vzh,t+1, 1 ].
	\end{eqnarray*}	
	In the final block of the decoder, the computation for RNN takes place: 
	
	\begin{lemma}\label{lem:rnn_trans_comp}
		There exists a function $O(\cdot)$ defined by feed-forward network such that,	
		\[
		O(\va_t)  =  [  (\vh_{t+1} - \vh_t), -\vs_{\overline{t+1}},    \vzh, -(t+1), -1 ],
		\]		
		where $\mW_{h}, \mW_{x} \text{ and } \vb$ denote the parameters of the RNN under consideration.
	\end{lemma}
	This leads to
	\begin{equation*}
	\vz_t = O(\va_t) + \va_t = [  \vh_{t+1}, \vzs;\;    \vzh, 0, 0 ] .
	\end{equation*}
\end{proof}

We choose the function $F$ for our decoder to be the identity function, therefore
$\Tilde{\vy}_{t+1} = [  \vh_{t+1}, \vzs;\;    \vzh, 0, 0 ]$, which means 
$\vy_{t+1} = \Tilde{\vy}_{t+1} + \pos(i+1)  =  [  \vh_{t+1}, \vzs;\;  \vzh,t+2, 1 ]$, proving our induction hypothesis.

\subsection{Technical Lemmas}

\begin{proof}[\bf \small Proof of Lemma \ref{lem:encoder}]
We construct a single-layer encoder achieving the desired $\mK^e$ and $\mV^e$. We make use of the residual connections and via trivial self-attention we get that $\vz_i = \vx_i$. More specifically for $i \in [n]$ we have 
\begin{eqnarray*}
    V^{\brac{1}}(\vx_i) = \vzero,\\
    \va_i = \vzero + \vx_i,\\
    O(\va_i) = \vzero, \\
    \vz_i = \vzero + \va_i = \vx_i.
\end{eqnarray*}

$V^{(1)}(\vx_i)=\vzero$ can be achieved by setting the weight matrix as the all-$0$ matrix. Recall that $\vx_i$ is defined as
\[
\begin{array}{rcllr}
    \vx_i  & = & [ & \vzh, \vs_i,  \\
            &&&    \vzh,i, 1 &].
     
\end{array}
\]

We then apply linear transformations in $K(\vz_i) = \vz_{i}\mW_k  $ and $V(\vz_i) = \vz_{i}\mW_v$, where
\[
\mW_k^{T} = \left[\begin{array}{ccccc}
0&0&\cdots &0&0\\
\vdots & &\ddots &\vdots & \vdots\\
0&0&\cdots &0&0\\
\hline
0&0&\cdots &0&1\\
0&0&\cdots &-1&0\\
\end{array}\right],
\]
and $\mW_{k} \in \bq^{d \times d}$, and similarly one can obtain $\vv_i$ by setting the submatrix of $\mW_{v} \in \bq^{d \times d}$ formed by the first $d-2$ rows and columns to the identity matrix, and the rest of the entries to zeros.

\end{proof}

\begin{lemma}\label{lem:dec-enc-attention}
	Let $\vq_{t}\in \Q^{d}$ be a query vector such that $\vq=[\cdot,\ldots,\cdot,t+1,1]$ where $t\in \N$ and `$\cdot$' denotes an arbitrary value.
	Then we have 
	\begin{equation}
	\Att(\vq_t, \mK^e, \mV^e)  =  [  \vzh, \vs_{\overline{t+1}}, \vzh,0, 0 ].
	\end{equation}
\end{lemma}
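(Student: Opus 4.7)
The plan is to directly exploit the carefully designed sparsity pattern of the encoder key vectors $\vk_i = [\vzh, \vzs, \vzh, -1, i]$: because the key is zero on every coordinate except the final two, the inner product $\<\vq_t, \vk_i\>$ only sees the last two coordinates of the query, which by hypothesis are $t+1$ and $1$. So the first step is the explicit computation
\begin{equation*}
\<\vq_t, \vk_i\> \;=\; (t+1)\cdot(-1) + 1 \cdot i \;=\; i - (t+1),
\end{equation*}
which gives $f^{\att}(\vq_t, \vk_i) = -|i - (t+1)|$ by the definition of the scoring function in \eqref{eq:att_score_trans}. Note that the other coordinates of $\vq_t$ (denoted by $\cdot$) are irrelevant, which is precisely the point of isolating positional information in the last two coordinates of $\vk_i$.

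The second step is to identify the unique $\arg\max$ of $f^{\att}(\vq_t, \vk_i)$ over $i \in \{1, \ldots, n\}$. I split into two cases. If $t+1 \leq n$, then $i = t+1$ achieves the value $0$, and every other index gives a strictly negative value, so the maximum is attained uniquely at $i = t+1 = \overline{t+1}$. If $t+1 > n$, then $|i-(t+1)|$ is strictly decreasing in $i$ on $\{1, \ldots, n\}$, so the maximum is attained uniquely at $i = n = \overline{t+1}$. In both cases the $\arg\max$ is exactly $\overline{t+1}$.

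Because the maximum is unique, the definition of $\hardmax$ gives attention weights $\alpha_i = \indicator(i = \overline{t+1})$, and therefore
\begin{equation*}
\Att(\vq_t, \mK^e, \mV^e) \;=\; \sum_{i=1}^n \alpha_i \vv_i \;=\; \vv_{\overline{t+1}} \;=\; [\, \vzh,\; \vs_{\overline{t+1}},\; \vzh,\; 0,\; 0\,],
\end{equation*}
which is the claimed equation. The argument is essentially calculation plus a clean case split; there is no real obstacle, since all the work was already done in engineering $\vk_i$ and $\vv_i$ so that the scoring function behaves as a distance to position $t+1$ and the values carry precisely the base embedding $\vs_i$ in the appropriate block. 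The only subtlety worth flagging is the uniqueness of the $\arg\max$, which is what lets us avoid spurious averaging under $\hardmax$; this is guaranteed by the strict convexity of $|i - (t+1)|$ as a function of integer $i$.
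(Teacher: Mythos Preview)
Your proof is correct and follows essentially the same approach as the paper's: compute $\<\vq_t,\vk_i\> = i-(t+1)$ using the sparsity of $\vk_i$, observe that $-|i-(t+1)|$ is uniquely maximized at $i=\overline{t+1}$ by the same two-case split, and conclude via $\hardmax$. Your version is in fact slightly more careful than the paper's, since you explicitly note that the arbitrary coordinates of $\vq_t$ are irrelevant and that uniqueness of the argmax is what prevents averaging; the only nitpick is the phrase ``strict convexity of $|i-(t+1)|$,'' which is not quite right (absolute value is convex but not strictly convex), though your actual argument for uniqueness is correct regardless.
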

\begin{proof}
Recall that $\vp_{t} = \vy_t = [\vh_t, 0, \ldots,0, t+1, 1]$ and $\vk_i = [0, 0, \ldots,0, -1, i]$ and hence
\[
\< \vp_t, \vk_i \> = i - (t+1), 
\]
\[
f^{\att}(\vp_t, \vk_i) = -|i-(t+1)|.
\]
Thus, for $i \in [n]$, the scoring function$f^{\att}(\vp_t, \vk_i)$ has the maximum value $0$ at index $i = t+1$ if $t < n$; for $t \geq n$,
the maximum value $t+1-n$ is achieved for $i = n$. Therefore
\[
\Att(\vp_t, \mK^e, \mV^e) = \vs_{\overline{t+1}}.
\]
\end{proof}
\begin{proof}[\bf \small Proof of Lemma \ref{lem:rnn_trans_comp}]
%
%
%
%
Recall that
\[
\begin{array}{rcllr}
    \va_t & = & [ & \vh_{t}, \vs_{\overline{t+1}},  \\
            &&&    \vzh,t+1, 1 &]
\end{array}
\]
Network $O(\va_{t})$ is of the form
\[
O(\va_t) = \rmW_2 \sigmoid(\rmW_1 \va_t + \vb_1),
\]
where $\rmW_i \in \bq^{d \times d}$ and $\vb \in \bq^{d}$ and 
\[
\rmW_1  = 
\begin{array}{c c} &
\begin{array}{c c c c} d_h & \; d_e & \; d_h & \; 2 \\
\end{array}
\\
\begin{array}{c c c c}
     d_h  \\
     d_e \\
     d_h \\
     2
\end{array}
& 
\left[
\begin{array}{c|c|c|c}
 \rmW_h & \rmW_x  & \vzero  &  \vzero   \\ \hline
 \vzero & \rmI & \vzero & \vzero \\ \hline
 \rmI & \vzero & \vzero & \vzero \\ \hline
 \vzero & \vzero & \vzero & \rmI \\
\end{array}
\right] 
\end{array}
\]
and $\vb_1 = [\vb_h, \vzs, \vzh, 0, 0]$. Hence
\begin{eqnarray*}
\sigmoid(\rmW_1 \va_t + \vb_1)  = [  \sigmoid(\rmW_{h}\vh_{t} + \rmW_{x}\vs_{\overline{t+1}} + \vb), \\
\vs_{\overline{t+1}}, \vh_t, t+1, 1 ]
\end{eqnarray*}
%
Next we define $\rmW_2$ by
\[
\rmW_2  = 
\begin{array}{c c} &
\begin{array}{c c c c} d_h & \; d_e & \; d_h & \; 2 \\
\end{array}
\\
\begin{array}{c c c c}
     d_h  \\
     d_e \\
     d_h \\
     2
\end{array}
& 
\left[
\begin{array}{c|c|c|c}
 \rmI & \vzero  & -\rmI  &  \vzero   \\ \hline
 \vzero & -\rmI & \vzero & \vzero \\ \hline
 \vzero & \vzero & \vzero & \vzero \\ \hline
 \vzero & \vzero & \vzero & -\rmI \\
\end{array}
\right]. 
\end{array}
\]
This leads to
\begin{eqnarray*}
    O(\va_t) = \rmW_2 \sigmoid(\rmW_1 \va_t + \vb_1) \\
      =  [ \sigmoid(\rmW_{h}\vh_{t} + \rmW_{x}\vs_{\overline{t+1}} + \vb) - \vh_t, - \vs_{\overline{t+1}},  \\
              \vzh, -(t+1), -1 ],
\end{eqnarray*}
which is what we wanted to prove.
\end{proof}

\section{Completeness of Directional Transformers}\label{sec:directional}


There are a few changes in the architecture of the Transformer to obtain directional Transformer. The first change is that there are no positional encodings and thus the input vector $\vx_i$ only consists of $\vs_i$. Similarly, there are no positional encodings in the decoder inputs and hence $\vy_t = \Tilde{\vy}_t$. The vector $\Tilde{\vy}$ is the output representation produced at the previous step and the first input vector to the decoder $\Tilde{\vy}_0 = \vzero$. Instead of using positional encodings, we apply positional masking to the inputs and outputs of the encoder.

\medskip

Thus the encoder-encoder attention in \eqref{eq:enc-enc-att-trans} is redefined as
\begin{equation*}
    \va_i^{\brac{\ell+1}} = \Att(Q(\vz_i^{\bl}), K(\mZ_i^{\bl}), V(\mZ_i^{\bl})) + \vz_i^{\bl}, 
\end{equation*}
where $\mZ^{\brac{0}} = \mX$. Similarly the decoder-encoder attention in \eqref{eq:enc-dec} is redefined by
\begin{equation*}
\va_t^{\brac{\ell}}  =  \Att(\vp_t^{\bl},\mK^{e}_{t},\mV^{e}_{t}) + \vp_t^{\bl}, 
\end{equation*}
where $\ell$ in $\va_{t}^{\brac{\ell}}$ denotes the layer $\ell$ and we use $\vv^{\brac{\ell, b}}$ to denote any intermediate vector being used in $\ell$-th layer and $b$-th block in cases where the same symbol is used in multiple blocks in the same layer.

\begin{theorem}\label{th:TC_trans}
	RNNs can be simulated by vanilla Transformers and hence the class of vanilla Transformers is Turing-complete.
\end{theorem}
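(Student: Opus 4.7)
The plan is to invoke Theorem~\ref{th:TC_RNN_main} and reduce the task to exhibiting, for every RNN $R$, a vanilla Transformer that simulates $R$ in the sense of Sec.~\ref{subsec:trans_tc}, i.e.\ at every step $t$ the decoder output $\vy_t$ contains the RNN hidden state $\vh_t$ as a subvector. The construction will use a single-head, single-layer encoder and a single-layer decoder, working in dimension $d = 2d_h + d_b + 2$. The input symbols $s_0,\ldots,s_n \in \Sigma^{*}$ will be embedded as $\vx_i = [\vzh,\BaseEmbedding(s_i),\vzh,i,1]$, so that each input vector carries both the base embedding and the explicit position $i$, along with two extra coordinates that the decoder will exploit.

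First I would handle the encoder: set the value matrices and the FFN to output $\vzero$, so that the residual connections force $\vz_i = \vx_i$, and then choose the final linear maps $K^{\brac{1}}$ and $V^{\brac{1}}$ so that the encoder returns $\vk_i = [\vzh,\vzs,\vzh,-1,i]$ and $\vv_i = [\vzh,\BaseEmbedding(s_i),\vzh,0,0]$. Thus keys encode only position and values encode only the input symbol, which is what allows later attention to ignore all other content of the query. I would then prove the invariant $\vy_t = [\vh_t,\vzs,\vzh,t+1,1]$ by induction on $t$, with base case handled by the seed $\vy_0 = [\vzh,\vzs,\vzh,1,1]$.

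For the inductive step, I would make the decoder--decoder attention block trivial (value matrix zero), so the residual gives $\vp_t = \vy_t$. The crucial computation is in the decoder--encoder attention block with the scoring function $f^{\att}(\vq,\vk) = -|\langle \vq,\vk\rangle|$: since $\vp_t$ carries $(t{+}1,1)$ in the positional coordinates and every $\vk_i$ carries $(-1,i)$, we get $\langle \vp_t,\vk_i\rangle = i-(t+1)$, so $\hardmax$ selects the index $\overline{t+1} := \min\{t+1,n\}$ uniquely. Hence the attention output is $\vv_{\overline{t+1}}$, and after the residual connection $\va_t = [\vh_t,\BaseEmbedding(s_{\overline{t+1}}),\vzh,t+1,1]$. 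The FFN $O^{\mathrm{dec}}(\cdot)$ now has both the hidden state and the current input available in disjoint blocks of $\va_t$, so I can design a two-layer saturated-linear network (as in Lemma~\ref{lem:rnn_trans_comp}) that computes the RNN update $\vh_{t+1} = \sigmoid(\mW_h \vh_t + \mW_x \BaseEmbedding(s_{\overline{t+1}}) + \vb)$ in one block and cancels out everything else via the residual $\vz_t = O^{\mathrm{dec}}(\va_t) + \va_t$, leaving $\tilde{\vy}_{t+1} = [\vh_{t+1},\vzs,\vzh,0,0]$. Adding $\pos(t+1)$ then restores the invariant.

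The main obstacle is arranging the FFN in the decoder so that, using only one attention retrieval and the saturated-linear activation, it can (i) simulate one RNN step in the first block, (ii) erase the retrieved input symbol and the position scalars so that the residual does not pollute the hidden-state block, and (iii) do all of this with a weight matrix independent of $t$. This is essentially bookkeeping with a $2\times 2$ block structure of the weight matrices, but it requires care because the residual is added after $O^{\mathrm{dec}}$, so the network must output the \emph{difference} $\vh_{t+1}-\vh_t$ in the hidden-state slot and the negatives of the unwanted entries elsewhere. Once this lemma is proved, the induction closes and the Transformer halts whenever the simulated RNN does, completing the reduction and hence establishing Turing-completeness.
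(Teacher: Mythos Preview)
Your proposal is correct and essentially identical to the paper's own proof: same dimension $d = 2d_h + d_b + 2$, same embedding and seed vector, same trivial encoder with $\vk_i$ encoding only position and $\vv_i$ only the symbol, same trivial decoder--decoder block, the same use of $f^{\att}(\vp_t,\vk_i)=-|i-(t{+}1)|$ to retrieve $\vv_{\overline{t+1}}$, and the same trick of having $O^{\mathrm{dec}}$ output the \emph{difference} so that the residual yields $[\vh_{t+1},\vzs,\vzh,0,0]$ before the positional encoding is added. The inductive invariant and the handling of the $t\geq n$ case via $\overline{t+1}=\min\{t+1,n\}$ also match exactly.
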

%
\begin{proof}
The Transformer network in this case will be more complex than the construction for the vanilla case. The encoder remains very similar, but the decoder is different and has two layers. 
\paragraph{Embedding.} 
We will construct our Transformer to simulate an RNN of the form given in the definition with the recurrence
$$
\vh_t = g(  \mW_h \vh_{t-1} + \mW_x \vx_t + \vb).
$$
The vectors used in the Transformer layers are of dimension $d= 2d_h + d_e+ 4|\Sigma| + 1 $. Where $d_h$ is the dimension of the hidden state of the $\RNN$ and $d_e$ is the dimension of the input embedding. 

All vector $\vv \in \bq^{d}$ used during the computation of the network are of the form
\begin{equation*}
    \vv  =  [ \vh_1, \vh_2, \vs_1, \oh{s_1}, x_1, \oh{s_2} \oh{s_3}, \oh{s_4} ]
\end{equation*}
where $\vh_i \in \bq^{d_h}, \vs \in \bq^{d_e} \text{ and } x_i \in \bq$. These blocks reserved for different types of objects. The vectors $\vh_i$s are reserved for computation related to hidden states of $\RNN$s, $\vs_i$s are reserved for input embeddings and $x_i$s are reserved for scalar values related to positional operations.

Given an input sequence $s_0s_1 s_2\cdots s_n \in \Sigma^{*} $ where $s_{0} = \startsym $ and $s_n = \$$, we use an embedding function $f : \Sigma \rightarrow \bq^d$ defined as
\[
\begin{array}{rcllr}
     f(s_i) = \vx_i & = & [ & \vzh, \vzh, \vs_i,  \\
            &&&    \oh{s_i}, 0, \vzw, \vzw, \vzw &]  
\end{array}
\]

Unlike \cite{perez2019turing}, we use the dot product as our scoring function  as used in  \citet{vaswani2017attention} in the attention mechanism in our construction,
\begin{equation*}
    f^{\att}(\vq_i, \vk_j) = \<\vq_i, \vk_j\>.
\end{equation*}

For the computation of the Transformer, we also use a vector sequence in $\bq^{|\Sigma|}$ defined by 
\begin{equation*}
\prop_t = \frac{1}{t+1}\sum_{j=0}^{t}\oh{ s_t}, 
\end{equation*}
where $0 \leq t \leq n$. 
The vector $\prop_{t} = (\prop_{t,1}, \ldots, \prop_{t,|\Sigma|})$ contains the proportion of each input symbol till step $t$ for $0 \leq t \leq n$. 
Set $\prop_{-1} = \vzero$. From the defintion of $\prop_t$, it follows that at any step $1 \leq k \leq |\Sigma|$ we have 
\begin{equation}\label{eqn:prop_freq}
\prop_{t,k} = \frac{\freq_{t,k}}{t+1},
\end{equation}
where $\freq_{t,k}$ denotes the number of times the $k$-th symbol $\symb_k$ in $\Sigma$ has appeared till the $t$-th step. 
Note that $\prop_{t,0} = \frac{1}{t+1}$ since the first coordinate corresponds to the proportion of the start symbol $\#$ which appears only once at $t=0$.
Similarly, $\prop_{t,|\Sigma|}=0$ for $0 \leq t <n$ and $\prop_{t, |\Sigma|}=1/(t+1)$ for $t \geq n$, since the end symbol $\termsym$ doesn't appear till the end of the input and it appears only once at $t=n$. 

We define two more sequences of vectors in $\bq^{|\Sigma|}$ for $0 \leq t \leq n$: 
\begin{align*}
    \mDelta_t &= \sigmoid(\prop_{t} - \prop_{t-1}), \\
    \diff_t &= (\mDelta_{t,1}, \ldots, \mDelta_{t,|\Sigma|-1}, 1/2^{t+1}). 
\end{align*}
Here $\mDelta_t$ denotes the difference in the proportion of symbols between the $t$-th and $(t-1)$-th steps, with the applicatin of sigmoid activation. 
In vector $\diff_t$, the last coordinate of $\mDelta_t$ has been replaced with $ 1/2^{t+1}$. The last coordinate in $\prop_{t}$ indicates the proportion of the terminal symbol $\$$ and hence the last value in $\mDelta_t$ denotes the change in proportion of $\$$. We set the last coordinate in $\diff_t$ 
to an exponentially decreasing sequence so that after $n$ steps we always have a nonzero score for the terminal symbol and it is taken as input in the underlying $\RNN$. Different and perhaps simpler choices for the last coordinate of $\diff_t$ may be possible. 
Note that $0 \leq \mDelta_{t,k} \leq 1$ and $0 \leq \diff_{t,k} \leq 1$ for $0 \leq t \leq n$ and $1 \leq k \leq |\Sigma|$.

\paragraph{Construction of $\TEnc$.}
The input to the network $\DTrans_M$ is the sequence $(s_0, s_1, \ldots, s_{n-1}, s_n)$ where $s_0 = \startsym$ and $s_n = \termsym$. Our encoder is a simple single layer network such that $\TEnc(\vx_0,\vx_1, \ldots, \vx_n) = (\mK^e, \mV^e) $ where $\mK^e = (\vke_0, \ldots, \vke_n) $ and $\mV^e = (\vve_0, \ldots, \vve_n)$ such that,
\begin{equation}\label{eqn:kei}
\begin{array}{rcllr}
    \vke_i & = & [ & \vzh, \vzh, \vzs,  \\
            &&&    \oh{s_i}, 0, \vzw, \vzw, \vzw &],     
\end{array}
\end{equation}
\[
\begin{array}{rcllr}
    \vve_i & = & [ & \vzh, \vzh, \vs_i,  \\
            &&&    \vzw, 0,  \vzw, \oh{s_i}, \vzw &]. 
\end{array}
\]
Similar to our construction of the encoder for vanilla transformer (\lemref{lem:encoder}), the above $\mK^e$ and $\mV^e$ can be obtained by making the output of $\Att(\cdot)=0$ by choosing the $V(\cdot)$ to always evaluate to $0$ and similarly for $O(\cdot)$, and using residual connections. 
Then one can produce $\mK^e$ and $\mV^e$ via simple linear transformations using $K(\cdot)$ and $V(\cdot)$. 

\paragraph{Construction of $\TDec$.}
At the $t$-th step we denote the input to the decoder as $\vy_t = \Tilde{\vy}_t$, where $0 \leq t \leq r$, where $r$ is the step where the decoder halts. Let $\vh_{-1} = \vzh$ and $\vh_{0} = \vzh$. We will prove by induction on $t$ that for $0 \leq t \leq r$ we have 
\begin{equation}\label{eqn:directional_induction_hypothesis}
\begin{array}{rcllr}
    \vy_t & = & [ & \vh_{t-1},  \vzh, \vzs,  \\
            &&&   \vzw, \frac{1}{2^t}, \vzw, \vzw, \prop_{\minn{t-1}} &]. 
     
\end{array}
\end{equation}
This is true for $t=0$ by the choice of seed vector:
\[
\begin{array}{rcllr}
    \vy_0 & = & [ & \vzh, \vzh, \vzs,  \\
            &&&   \vzw, 1,  \vzw, \vzw, \vzw &].
     
\end{array}
\]
Assuming the truth of \eqref{eqn:directional_induction_hypothesis} for $t$, we show it for $t+1$.

\paragraph{Layer 1.}
Similar to the construction in \lemref{lem:encoder}, in the decoder-decoder attention block we set $V^{\brac{1}}(\cdot)= \vzd$ and use the residual connections to set $\vp_t^{\brac{1}} = \vy_t$.
\medskip
At the $t$-th step in the decoder-encoder attention block of layer 1 we have
\begin{align*}
    \Att(\vp_{t}^{\brac{1}}, \mK^e_{\bt},  \mV^e_{\bt} ) = \sum_{j=0}^{\bt}\hat{\alpha}_{t, j}^{\brac{1,2}} \vv^e_j, 
\end{align*}
where
\begin{align*}
&(\hat{\alpha}_{t,1}^{\brac{2,2}}, \ldots, \hat{\alpha}_{t,\bt}^{\brac{2,2}}) \\
&= \hardmax\left(\<\vp_{t}^{\brac{1}}, \vk^e_{1}\>, \ldots, \<\vp_{t}^{\brac{1}}, \vk^e_{\bt}\>\right) \\
&= \hardmax(0, \ldots, 0) \\
&= \left(\frac{1}{\bt +1}, \ldots, \frac{1}{\bt +1}\right).
\end{align*}
Therefore
\begin{equation*}
\begin{array}{rcllr}
    \sum_{j=0}^{\bt}\hat{\alpha}_{t, j}^{\brac{1,2}} \vv^e_j & = & [ & \vzh, \vzh, \vs_{0:t},  \\
            &&&   \vzw, 0, \vzw, \prop_{\bt}, \vzw &]
\end{array}
\end{equation*}
where
\[
    \vs_{0:t} = \frac{1}{\bt +1}\sum_{j=0}^{\bt}\vs_{j}.
\]
Thus,
\begin{align*}
       \va_{t}^{\brac{1}} &=  \Att(\vp_{t}^{\brac{1}}, \mK^e_{\bt},  \mV^e_{\bt} )  + \vp_t^{\brac{1}} \nonumber \\
    &= [  \vh_{t-1}, \vzh, \vs_{0:t}, \vzw, \frac{1}{2^t}, \vzw, \prop_{\bt}, \prop_{\minn{t-1}} ]. \nonumber
\end{align*}
In Lemma~\ref{lem:disan-l1-b3} we construct feed-forward network  $O^{\brac{1}}(\cdot)$ such that
\begin{align*}
   O^{\brac{1}}(\va_{t}^{\brac{1}})  
    =   [ \vzh, \vzh, -\vs_{0:t},  
             \diff_{\bt},  -\frac{1}{2^{t}}+ \frac{1}{2^{t+1}},\\ \vzw, 
             \shoveright{-\prop_{\bt},  -\prop_{\minn{t-1}}+\prop_{\bt}].}
\end{align*}
Hence
\begin{align}\label{eqn:z1t}
    \vz_{t}^{\brac{1}} &= O^{\brac{1}}(\va_{t}^{\brac{1}}) + \va_{t}^{\brac{1}} \\
    &=  [  \vh_{t-1},  \vzh ,  \vzs,  
            \diff_{\bt}, \frac{1}{2^{t+1}}, \vzw, \vzw, \prop_{\bt} ].	 \nonumber
\end{align}


\paragraph{Layer 2.}
In the first block of layer 2, we set the value transformation function to identically zero similar to \lemref{lem:encoder}, i.e. $V^{\brac{2}}(\cdot)= \vzero$ which leads to the output of $\Att(\cdot)$ to be $\vzero$ and then using the residual connection we get $\vp_{t}^{\brac{2}} = \vz_{t}^{\brac{1}}$.
It follows by Lemma~\ref{lem:dec-enc-l2-attention} that
\begin{align*}
    \Att&(\vp_t^{\brac{2}}, \mK^e_{\bt}, \mV^e_{\bt})    \\
  & = [  \vzh, \vzh, \vs_{\bt}, 
           \vzw, 0, \vzw, \oh{s_t}, \vzw ].
\end{align*}
Thus,
\begin{align*}
    \va_{t}^{\brac{2}} &=  \Att(\vp_t^{\brac{2}}, \mK^e_{\bt}, \mV^e_{\bt}) + \vp_{t}^{\brac{2}}    \\
   & = [  \vh_{t-1}, \vzh, \vs_{\bt},  
              \diff_{\bt} , \frac{1}{2^{t+1}}, \vzw, \oh{s_t}, \prop_{\bt} ].
\end{align*}
In the final block of the decoder in the second layer, the computation for RNN takes place. 
In Lemma~\ref{lem:rnn_dirtrans_comp} below we construct the feed-forward network $O^{\brac{2}}(\cdot)$ such that 
\begin{align*}
O^{\brac{2}}(\va_t^{\brac{2}})  =  [\sigmoid(\mW_{h}\vh_{t-1} + \mW_{x}\vs_{\bt} + \vb) - \vh_{t-1} \\ 
\shoveright{\vzh, -\vs_{\bt},  -\diff_{t} , 0, \vzw, - \oh{s_t}, \vzw]} 
\end{align*}
and hence
\begin{align*}
    \vz_{t}^{\brac{2}} = & O^{\brac{2}}(\va_t^{\brac{2}}) + \va_t^{\brac{2}} \\
     = & [  \sigmoid(\rmW_{h}\vh_{t-1} + \rmW_{x}\vs_{\bt} + \vb), \vzh, \vzs,  \\
         &  \vzw,  \frac{1}{2^{t+1}}, \vzw, \vzw, \prop_{\bt}   ],     
\end{align*}
which gives
\begin{equation*}
\begin{array}{rcllr}
    \vy_{t+1} & = & [ &  \vh_{t} , \vzh, \vzs,  \\
            &&&    \vzw,  \frac{1}{2^{t+1}}, \vzw, \vzw, \prop_{\bt}   &],
     
\end{array}
\end{equation*}
proving the induction hypothesis \eqref{eqn:directional_induction_hypothesis} for $t+1$, and completing the simulation of RNN.
\end{proof}

\subsection{Technical Lemmas}

\begin{lemma}\label{lem:disan-l1-b3}
	There exists a function $O^{\brac{1}}(.)$ defined by feed-forward network such that,
	\begin{align*}
	O^{\brac{1}}(\va_{t}^{\brac{1}})  &=   [ \vzh, \vzh, -\vs_{0:t}, \diff_t,\\
	&  -\frac{1}{2^{t}}+ \frac{1}{2^{t+1}}, \vzw,\;-\prop_{t},  -\prop_{t-1}+\prop_{t} ]
	\end{align*}
\end{lemma}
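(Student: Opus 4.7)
The plan is to realize $O^{\brac{1}}$ as a single hidden-layer feedforward network of the form $O^{\brac{1}}(\va) = \mW_2\, \sigmoid(\mW_1 \va + \vb_1)$, exactly mirroring the style of the construction in the proof of Lemma~\ref{lem:rnn_trans_comp}. Writing the target output block-by-block against the input $\va_t^{\brac{1}}$, every block other than $\diff_{\bt}$ is a simple linear function of blocks already present in $\va_t^{\brac{1}}$: the $-\vs_{0:t}$, $-\prop_{\bt}$, and $-\prop_{t-1}+\prop_{t}$ blocks are sign-flipped or differenced copies of input blocks; the scalar $-1/2^{t} + 1/2^{t+1}$ equals $-\tfrac{1}{2} \cdot 1/2^{t}$, obtained from the $1/2^t$ coordinate of $\va_t^{\brac{1}}$; and the two $\vzh$ blocks and the $\vzw$ block are zero. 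These are handled by routing the relevant inputs through the hidden layer using identity blocks in $\mW_1$ and the observation that $\sigmoid(x)=x$ for $x\in[0,1]$.

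The only block that actually requires the nonlinearity is $\diff_{\bt} = (\mDelta_{\bt,1},\ldots,\mDelta_{\bt,|\Sigma|-1},\, 1/2^{t+1})$. Since $\mDelta_{\bt} = \sigmoid(\prop_{\bt} - \prop_{t-1})$ and the input $\va_t^{\brac{1}}$ contains both $\prop_{\bt}$ and $\prop_{t-1}$, I would allocate a dedicated block of $\mW_1$ whose action on $\va_t^{\brac{1}}$ is exactly the coordinatewise difference $\prop_{\bt} - \prop_{t-1}$, with zero bias. Applying the saturated linear activation $\sigmoid$ coordinatewise then yields $\mDelta_{\bt}$ directly, and $\mW_2$ routes its first $|\Sigma|-1$ coordinates into the $\diff_{\bt}$ positions of the output. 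The last coordinate of $\diff_{\bt}$, namely $1/2^{t+1}$, is produced together with the scalar $-1/2^{t+1}$ above from the same $-\tfrac{1}{2}\cdot 1/2^t$ hidden unit.

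The only place any care is needed, and hence the main obstacle, is ensuring that every quantity on which we rely $\sigmoid$ to behave as the identity genuinely lies in $[0,1]$. This is immediate for $\prop_{\bt}, \prop_{t-1}$ (probability vectors), for $1/2^{t}\in(0,1]$, and for $\vh_{t-1}$ by the inductive invariant that RNN hidden states are outputs of a previous $\sigmoid$. The embedding-average block $\vs_{0:t}$ is the only one that need not be $[0,1]$-valued; it is handled by the standard two-unit decomposition $x = \sigmoid(x) - \sigmoid(-x)$ (valid on $[-1,1]$, and extendable to any bounded range by a preliminary rescaling), doubling the width allocated to that block. Assembling $\mW_1$, $\vb_1$, and $\mW_2$ as explicit block-structured matrices following the pattern of Lemma~\ref{lem:rnn_trans_comp} is then routine and yields the claimed $O^{\brac{1}}(\va_t^{\brac{1}})$.
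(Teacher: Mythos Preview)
Your proposal is correct and follows essentially the same approach as the paper: realize $O^{\brac{1}}$ as $\mW_2\,\sigmoid(\mW_1\va+\vb_1)$, use $\mW_1$ to form $\prop_{\bt}-\prop_{\overline{t-1}}$ so that the $\sigmoid$ yields $\mDelta_{\bt}$, halve $1/2^t$ to produce the $1/2^{t+1}$ entries, and route the remaining blocks through as identities before $\mW_2$ negates/recombines them. Your explicit treatment of the $\vs_{0:t}$ block via $x=\sigmoid(x)-\sigmoid(-x)$ is in fact more careful than the paper, which simply writes $\sigmoid(\vs_{0:t})=\vs_{0:t}$ without comment (tacitly assuming the base embeddings lie in $[0,1]^{d_e}$).
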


\begin{proof}
We define the feed-forward network $O^{(1)}(.)$ such that
\begin{align*}
    O^{\brac{1}}(\va_{t}^{\brac{1}})  &=   [  \vzh, \vzh,  -\vs_{0:t},   \diff_t - \prop_{t}, \\
  &  -\frac{1}{2^{t}}+ \frac{1}{2^{t+1}},\; \vzw,\;\vzw, \;\; -\prop_{t-1}+\prop_{t} ]
\end{align*}
where
\[
\diff_t = (\mDelta_{t,1}, \ldots, \mDelta_{t,n-1}, 1/2^{t+1}), \quad 0 \leq \delta_t \leq 1
\]
\medskip

Recall that,

\[
\begin{array}{rcllr}
    \va_{t}^{\brac{1}} & =  & [ & \vh_{t-1}, \vzh, \vs_{0:t},  \\
            &&&   \prop_{t}, \frac{1}{2^t}, \vzw, \vzw, \prop_{t-1} &]
    \end{array}
\]

We define the feed-forward network $O(\va_{t})$ as follows,

\[
O^{(1)}(\va_t) = \rmW_2 \sigmoid(\rmW_1 \va_t^{(1)} + \vb_1)
\]
where $\mW_i \in \bq^{d \times d}$ and $\vb_1 \in \bq^{d}$. Define $\mW_1$ as
\[
\begin{array}{c c} &
\begin{array}{c c c c c c c} 2d_h &  d_e &  d_{\omega} & 1 & d_{\omega} & d_{\omega} &d_{\omega} \\
\end{array}
\\
\begin{array}{c c c c c c c}
     2d_h  \\
     d_e \\
     d_{\omega}-1 \\
     1 \\
     1\\
     d_{\omega} \\
     d_{\omega} \\
     d_{\omega}
\end{array}
& 
\left[
\begin{array}{c|c|c|c|c|c|c}
 \vzero & \vzero  & \vzero  &  \vzero &  \vzero &  \vzero &  \vzero   \\ \hline
 \vzero & \rmI  & \vzero  &  \vzero &  \vzero &  \vzero &  \vzero   \\ \hline
 \vzero & \vzero  & \vzero  &  \vzero &  \vzero &  \rmI &  -\rmI   \\ \hline
 \vzero & \vzero  & \vzero  &  \frac{1}{2} &  \vzero &  \vzero &  \vzero   \\ \hline
 \vzero & \vzero  & \vzero  &  \frac{1}{2} &  \vzero &  \vzero &  \vzero   \\ \hline
 \vzero & \vzero  & \rmI  &  \vzero &  \vzero &  \vzero &  \vzero   \\ \hline
 \vzero & \vzero  & \vzero  &  \vzero &  \vzero &  \rmI &  \vzero   \\ \hline
 \vzero & \vzero  & \vzero  &  \vzero &  \vzero &  \vzero &  \rmI
\end{array}
\right] 

\end{array}
\]
and $\vb_1 = \vzero$, then
\begin{align*}
 \sigmoid(\mW_1 \va_t^{(1)} + \vb_1)   =   [& \vzh, \vzh, \vs_{0:t},  \mDelta_t, \frac{1}{2^{t+1}}, \\
 & \prop_t,\;\prop_{t-1}, \;\; \prop_{t-1} ]
\end{align*}

We define $\mW_2$ as 

\[ 
\begin{array}{c c} &
\begin{array}{c c c c c c c} 2d_h &  d_e &  d_{\omega-1} & 2 & d_{\omega} & d_{\omega} &d_{\omega} \\
\end{array}
\\
\begin{array}{c c c c c c c}
     2d_h  \\
     d_e \\
     d_{\omega}-1 \\
     1 \\
     1 \\
     d_{\omega} \\
     d_{\omega} \\
     d_{\omega}
\end{array}
& 
\left[
\begin{array}{c|c|c|c|c|c|c}
 \vzero & \vzero  & \vzero  &  \vzero &  \vzero &  \vzero &  \vzero   \\ \hline
 \vzero & -\rmI  & \vzero  &  \vzero &  \vzero &  \vzero &  \vzero   \\ \hline
 \vzero & \vzero  & \rmI  &  \vzero &  \vzero &  \vzero &  \vzero   \\ \hline
 \vzero & \vzero  & \vzero  &  1,0 &  \vzero &  \vzero &  \vzero   \\ \hline
 \vzero & \vzero  & \vzero  &  -2, 1 &  \vzero &  \vzero &  \vzero   \\ \hline
 \vzero & \vzero  & \rmI  &  \vzero &  \vzero &  \vzero &  \vzero   \\ \hline
 \vzero & \vzero  & \vzero  &  \vzero &  \vzero &  -\rmI &  \vzero   \\ \hline
 \vzero & \vzero  & \vzero  &  \vzero &  \vzero &  \rmI &  -\rmI
\end{array}
\right] 

\end{array}
\]

This leads to
\begin{align*}
    O^{\brac{1}}(\va_{t}^{\brac{1}}) & =   [  \vzh, \vzh,  \vs_{0:t},  \diff_t, \\ 
   & -\frac{1}{2^{t}}+ \frac{1}{2^{t+1}},\; \vzw,\;-\prop_t, \;\; -\prop_{t-1}+\prop_{t}]
\end{align*}

which is what we wanted to prove.

\end{proof}

\begin{lemma}\label{lem:dec-enc-l2-attention}
	Let $\vp_{t}^{\brac{2}} \in \Q^{d}$ be a query vector such that
	
	\[
	\begin{array}{rcllr}
	\vp_{t}^{\brac{2}} & =  & [ & \cdot, \;\; \cdot ,  \;\; \cdot,  \\
	&&&   \diff_t, \;\; \cdot, \cdot, \cdot, \cdot &]
	\end{array}
	\]
	where $t \geq 0$  and `$\cdot$' denotes an arbitrary value. Then we have
	\begin{equation} \label{eqn:dec-enc-l2-attention}
	\begin{array}{rcllr}
	\Att(\vp_t^{\brac{2}}, \mK^e_{\bt}, \mV^e_{\bt}) & = & [ & \vzh, \vzh, \vs_{\bt},  \\
	&&&   \vzw, 0, \vzw, \oh{s_t}, \vzw &].
	\end{array}
	\end{equation}
\end{lemma}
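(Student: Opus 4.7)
The plan is to compute the attention explicitly by exploiting the extreme sparsity of the key and value vectors produced by the encoder. Because $\vk^e_i$ in (\ref{eqn:kei}) is nonzero only in the one-hot block, and the query $\vp_t^{\brac{2}}$ carries $\diff_t$ in exactly that block (with all other blocks in the corresponding positions being $\vzw$ or zero), the inner product collapses to
\[
\< \vp_t^{\brac{2}}, \vk^e_i \> \;=\; \< \diff_t, \oh{s_i} \> \;=\; \diff_{t, c(s_i)},
\]
where $c(\symb) \in \{1,\ldots,|\Sigma|\}$ denotes the index of a symbol $\symb$ inside the alphabet. Thus the attention logit at index $i$ depends only on which symbol sits at that position.

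Next I would identify the argmax of $\{\diff_{t, c(s_i)}\}_{i=0}^{\bt}$. For $t \leq n$ with $s_t \neq \termsym$, compare the proportions at steps $t$ and $t-1$: only the count of $s_t$ increases by one while the denominator grows from $t$ to $t+1$, so $\prop_{t, c(s_t)} - \prop_{t-1, c(s_t)} > 0$, whereas for every other symbol we have $\prop_{t, k} - \prop_{t-1, k} \leq 0$. Applying $\sigmoid$, which sends nonpositive inputs to $0$, yields $\mDelta_{t, c(s_t)} > 0$ and $\mDelta_{t, k} = 0$ for $k \neq c(s_t)$. Overwriting the last coordinate with $1/2^{t+1}$ to form $\diff_t$ does not disturb the argmax when $c(s_t) \neq |\Sigma|$. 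For the boundary case $t = n$ (so $s_t = \termsym$) and for $t > n$ (so $\bt = n$), the unique positive coordinate of $\diff_t$ is its last one, which aligns with the terminal symbol $\termsym = s_n$. In every case, the hardmax places equal weight $1/\lambda_t$ on precisely the set $J_t := \{\, i : 0 \leq i \leq \bt,\ s_i = s_t \,\}$ and zero weight elsewhere.

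The last step is to average the value vectors $\vv^e_i$ over $i \in J_t$. For each such $i$, the third block of $\vv^e_i$ equals $\BaseEmbedding(s_i) = \BaseEmbedding(s_{\bt})$ and the seventh block equals $\oh{s_i} = \oh{s_t}$, while all other blocks are zero. Consequently any convex combination over $J_t$ produces the vector $[\vzh,\vzh,\vs_{\bt},\vzw,0,\vzw,\oh{s_t},\vzw]$, which is exactly the claim in (\ref{eqn:dec-enc-l2-attention}).

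The main obstacle is the bookkeeping around the terminal symbol: we must verify that the auxiliary coordinate $1/2^{t+1}$ is large enough to remain strictly positive (so that it dominates zeros generated by $\sigmoid$ on nonpositive arguments) whenever the input stream has already ended, yet inert — that is, not co-maximal with $\mDelta_{t, c(s_t)}$ — when $t < n$ and $s_t \neq \termsym$. Once this edge behaviour is checked, everything else reduces to mechanical block-wise multiplications using the prescribed sparsity pattern of $\mK^e$ and $\mV^e$.
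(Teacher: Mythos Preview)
Your approach is correct and matches the paper's: compute $\langle \vp_t^{\brac{2}}, \vk^e_i\rangle = \diff_{t,c(s_i)}$ from the block structure, show the maximum is attained exactly on $J_t=\{i\le \bt: s_i=s_t\}$ via the sign analysis of $\prop_t-\prop_{t-1}$, and then observe that all $\vv^e_i$ with $i\in J_t$ coincide so the convex combination is that common value.

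One remark on what you flag as the ``main obstacle'': the inertness of the $1/2^{t+1}$ coordinate for $t<n$ is in fact vacuous here. Because the attention is over $\mK^e_{\bt}$ (positional masking on the encoder side), for $t<n$ the attended indices are $0,\ldots,t$, and $\termsym$ occurs only at position $n$. Hence no attended key has $c(s_i)=|\Sigma|$, so the last coordinate of $\diff_t$ is never read, and there is nothing to compare against $\mDelta_{t,c(s_t)}$. The paper exploits exactly this: its case analysis for $0<t<n$ only ranges over $j\le t$ and never touches the $\termsym$ coordinate. So the edge case you anticipate needing a delicate inequality for simply does not arise.
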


\begin{proof}
Let  
	\begin{align*}
	&(\hat{\alpha}_{t,1}^{\brac{2,2}}, \ldots, \hat{\alpha}_{t,\bt}^{\brac{2,2}}) \\
	&= \hardmax\left(\<\vp_{t}^{\brac{2}}, \vk^e_{1}\>, \ldots, \<\vp_{t}^{\brac{2}}, \vk^e_{\bt}\>\right)
	\end{align*}
	be the vector of normalized attention scores in the decoder-encoder attention block of layer 2 at time $t$. Then
	\[
	\Att(\vp_{t}^{\brac{2}}, \mK^e_{\bt},  \mV^e_{\bt} ) = \sum_{j=0}^{\bt}\hat{\alpha}_{t, j}^{\brac{2,2}} \vv^e_j.
	\]
	We claim that 
	\begin{claim}\label{claim:normalized_scores} For $t \geq 0$ we have 
		\begin{align*}
		&(\hat{\alpha}_{t,1}^{\brac{2,2}}, \ldots, \hat{\alpha}_{t,\bt}^{\brac{2,2}}) \\= 
		&\frac{1}{\lambda_{\bt}}\left(\indicator(s_0=s_t), \indicator(s_1=s_t), \ldots, \indicator(s_{\bt}=s_t)\right),
		\end{align*}
%
		where $\lambda_t$ is a normalization factor given by $\lambda_t = \sum_{j=0}^{n-1} \indicator(s_j = s_t)$.
	\end{claim}
	We now prove the lemma assuming the claim above. Denote the L.H.S. in \eqref{eqn:dec-enc-l2-attention} by $\bm{\gamma}_t$. Note that if 
	$s_j = s_t$, then $\vv^e_j = \bm{\gamma}_t$. Now we have
	\begin{eqnarray*}
		\sum_{j=0}^{\bt}\hat{\alpha}_{t, j}^{\brac{2,2}} \vv^e_j &=& \frac{1}{\lambda_t} \sum_{j=0}^{\bt} \indicator(s_j=s_t) \, \vv^e_j \\
		&=& \frac{1}{\lambda_t} \left(\sum_{j=0}^{\bt} \indicator(s_j=s_t)\right) \bm{\gamma}_t  \\
		&=& \bm{\gamma}_t,
	\end{eqnarray*}
	completing the proof of the lemma modulo the proof of the claim, which we prove next. 
\end{proof}

\begin{proof}(of Claim~\ref{claim:normalized_scores})
	%
	%
	For $0 < t \leq n$, the vector $\prop_t - \prop_{t-1}$ has the form
	\[
	\left(\left(\frac{1}{t+1} - \frac{1}{t}\right), \ldots, \left(\frac{\freq_{t,k}}{t+1}-\frac{\freq_{t-1,k}}{t}\right), \ldots , 0\right).
	\]
	
	If  $s_t = \symb_k$, then 
	\begin{eqnarray}
	(\prop_t &-& \prop_{t-1})_k \\
	 &=&  \left(\frac{\freq_{t,k}}{t+1}-\frac{\freq_{t-1,k}}{t}\right) \\
	&=& \left(\frac{\freq_{t-1,k}+1}{t+1}-\frac{\freq_{t-1,k}}{t}\right) \\
	&=& \frac{t-\freq_{t-1,k}}{t(t+1)} \\
	&\geq& \frac{1}{t(t+1)}.
	\end{eqnarray}
	The last inequality used our assumption that $s_0 = \#$ and that $\#$ does not occur at any later time and therefore $\freq_{t-1,j} < t$. 
	On the other hand, if $s_t \neq \symb_k$, then
	\begin{eqnarray}
	(\prop_t - \prop_{t-1})_k &=&  \left(\frac{\freq_{t,k}}{t+1}-\frac{\freq_{t-1,k}}{t}\right) \nonumber \\
	&=& \left(\frac{\freq_{t-1,k}}{t+1}-\frac{\freq_{t-1,k}}{t}\right) \nonumber \\
	&=& -\frac{\freq_{t-1,j}}{t(t+1)}  \\
	&\leq& 0. \nonumber
	\end{eqnarray}
	This leads to,
	\[
	\begin{array}{cc}
	(\prop_t - \prop_{t-1})_k \quad > 0 & \quad \text{ if } s_t = \symb_k, \\
	(\prop_t - \prop_{t-1})_k \quad \leq 0 & \quad \text{otherwise}.	
	\end{array}
	\]
	In words, the change in the proportion of a symbol is positive from step $t-1$ to $t$ if and only if it is the input symbol at the $t$-th step.
	For $0 \leq t \leq n$ and $1 \leq k \leq |\Sigma|$, this leads to
	\[
	\begin{array}{cc}
	\mDelta_{t,k} = \sigmoid(\prop_t - \prop_{t-1})_k \quad > 0 & \quad \text{ if } s_t = \symb_k, \\
	\mDelta_{t,k} = \sigmoid(\prop_t - \prop_{t-1})_k \quad =  0 & \quad \text{otherwise},
	\end{array}
	\]
	For $t > n$,
	\[
	\mDelta_t = \vzero.
	\]

	Recall that $\vp_t^{\brac{2}}= \vz_{t}^{\brac{1}}$ which comes from \eqref{eqn:z1t}, and $\vk^e_j$ is defined in \eqref{eqn:kei}. We reproduce these for convenience:
	\begin{eqnarray*}
		\begin{array}{rcllr}
			\vp_{t}^{\brac{2}} & =  & [ & \vh_{t-1}, \;\; \vzh ,  \;\; \vzs,  \\
			&&&   \diff_{\bt}, \;\; \frac{1}{2^{t+1}}, \vzw, \vzw, \;\;\prop_{\bt} &],
		\end{array} \\
		\begin{array}{rcllr}
			\vk^e_j & = & [ & \vzh, \vzh, \vzs,  \\
			&&&    \oh{s_j}, 0, \vzw, \vzw, \vzw &].
		\end{array}
	\end{eqnarray*}
	%
	It now follows that for $0 < t < n$, if $0 \leq j \leq t$ is such that $s_j \neq s_t$, then 
	\begin{equation*}
	\<\vp_{t}^{\brac{2}}, \vk^e_{j}\> = \<\diff_t, \oh{s_j} \> = \diff_{t, i} = 0. 
	\end{equation*}
	And for $0 < t < n$, if $0 \leq j \leq t$ is such that $s_j = s_t = \symb_i$, then 
	\begin{eqnarray}
	\<\vp_{t}^{\brac{2}}, \vk^e_{j}\> = \<\diff_t, \oh{s_j} \> = \diff_{t, i} \\ = \frac{t-\freq_{t-1,j}}{t(t+1)} \geq \frac{1}{t(t+1)}.
	\end{eqnarray}
	Thus, for $0 \leq t < n$, in the vector $\left(\<\vp_{t}^{\brac{2}}, \vk^e_{0}\>, \ldots, \<\vp_{t}^{\brac{2}}, \vk^e_{t}\> \right)$, the largest coordinates are the ones indexed by $j$ with $s_j = s_t$ and they all equal $\frac{t-\freq_{t-1,i}}{t(t+1)}$. 
	All other coordinates are $0$. For $t \geq n$, only the last coordinate 
	$\<\vp_{t}^{\brac{2}}, \vk^e_{n}\>= \<\diff_t, \oh{\$} \> = \frac{1}{2^{t+1}}$ is non-zero.
	Now the claim follows immediately by the definition of $\hardmax$.
\end{proof}

\begin{lemma}\label{lem:rnn_dirtrans_comp}
	There exists a function $O^{\brac{2}}(.)$ defined by feed-forward network such that, for $t \geq 0$,	
\begin{align*}
	O^{\brac{2}}(\va_t^{\brac{2}})  =  [\sigmoid(\mW_{h}\vh_{t-1} + \mW_{x}\vs_{\bt} + \vb) - \vh_{t-1}, \\ 
\shoveright{\vzh, -\vs_{\bt},  -\diff_{t} , 0, \vzw, - \oh{s_t}, \vzw]} 
\end{align*}	
	where $\mW_{h}, \mW_{x} \text{ and } \vb$ denote the parameters of the RNN under consideration.	
\end{lemma}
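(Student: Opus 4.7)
The plan is to build $O^{\brac{2}}$ as a standard two-layer feed-forward network $O^{\brac{2}}(\va) = \mW_2\,\sigmoid(\mW_1 \va + \vb_1)$, by direct analogy with the construction of $O$ in Lemma~\ref{lem:rnn_trans_comp}. Recall that $\va_t^{\brac{2}} = [\vh_{t-1}, \vzh, \vs_{\bt}, \diff_{\bt}, 1/2^{t+1}, \vzw, \oh{s_t}, \prop_{\bt}]$, while the desired output of $O^{\brac{2}}$ is obtained by negating every block of $\va_t^{\brac{2}}$ whose corresponding block of $\vz_t^{\brac{2}}$ should vanish, and by replacing the first block by $\sigmoid(\mW_h \vh_{t-1} + \mW_x \vs_{\bt} + \vb) - \vh_{t-1}$. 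Thus $O^{\brac{2}}$ must simultaneously perform the RNN update in the first block and act as a sign-flip on most of the other blocks.

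First I would design $\mW_1$ and $\vb_1$ block-wise. In the row-block corresponding to the first $d_h$ coordinates of the hidden layer, $\mW_1$ picks out $\vh_{t-1}$ and $\vs_{\bt}$ and forms the affine combination $\mW_h \vh_{t-1} + \mW_x \vs_{\bt}$, with $\vb_1$ contributing the bias $\vb$. In every other relevant row-block, $\mW_1$ simply copies the sub-vector we wish to negate ($\vh_{t-1}$, $\vs_{\bt}$, $\diff_{\bt}$, $\oh{s_t}$) into a dedicated set of intermediate coordinates, with bias zero. After applying the saturated linear activation $\sigmoid$, the first block becomes exactly $\vh_t$ by the RNN recurrence, while the copied blocks pass through unchanged: $\vh_{t-1} \in [0,1]^{d_h}$ because $g$ uses $\sigmoid$; $\oh{s_t} \in \{0,1\}^{m}$; $\diff_{\bt,k} \in [0,1]$ by Lemma~\ref{lem:disan-l1-b3}; and the scalar $1/2^{t+1} \in (0,1)$ is also preserved. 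For the base embedding $\vs_{\bt}$, whose coordinates need not lie in $[0,1]$, I would use the standard trick $x = \sigmoid(x) - \sigmoid(x-1)$, allocating two intermediate coordinates per component of $\vs_{\bt}$ and combining them with signs $+1,-1$ in $\mW_2$.

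Then $\mW_2$ is the obvious read-out map: it places $+\vh_t$ and $-\vh_{t-1}$ into the first output block to produce $\vh_t - \vh_{t-1}$; it writes $-\vs_{\bt}$, $-\diff_{\bt}$, and $-\oh{s_t}$ into the corresponding output blocks by selecting the right intermediate coordinates with coefficient $-1$; and it writes zero into the coordinate of $1/2^{t+1}$ and into the $\prop_{\bt}$ block, so the residual connection $\vz_t^{\brac{2}} = O^{\brac{2}}(\va_t^{\brac{2}}) + \va_t^{\brac{2}}$ leaves those entries intact. All other output coordinates are set to zero.

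The main obstacle is purely bookkeeping: laying out the block structure of $\mW_1$, $\vb_1$, and $\mW_2$ so that the intermediate layer carries exactly the right clean copies of each sub-vector without interference between blocks, and so that the $\vs_{\bt}$ component (which may fall outside $[0,1]$) is handled via the two-coordinate $\sigmoid(x)-\sigmoid(x-1)$ trick rather than being clipped. There is no new mathematical content beyond Lemma~\ref{lem:rnn_trans_comp}; once the block layout is fixed, the identity $\sigmoid(x)=x$ on $[0,1]$ together with the definition of the RNN recurrence yields the claimed formula directly.
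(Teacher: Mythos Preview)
Your proposal is correct and follows exactly the paper's approach: the paper's own proof is the single line ``very similar to proof of Lemma~\ref{lem:rnn_trans_comp},'' i.e.\ build $O^{\brac{2}}(\va)=\mW_2\,\sigmoid(\mW_1\va+\vb_1)$ with block matrices that compute the RNN update in the first $d_h$ coordinates and copy the remaining sub-vectors for sign-flipping by $\mW_2$, precisely as you describe.

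One small correction: the identity you quote, $x=\sigmoid(x)-\sigmoid(x-1)$, is false (the right-hand side is a tent function supported on $[0,2]$); the pass-through you want is $x=\sigmoid(x)-\sigmoid(-x)$ for $|x|\le 1$, or a scaled version thereof. In the paper's setting this issue does not arise anyway: the base embeddings in the Siegelmann--Sontag construction take values in $\{0,1\}$, so $\vs_{\bt}\in[0,1]^{d_b}$ and $\sigmoid$ acts as the identity on it, just as on $\vh_{t-1}$, $\diff_{\bt}$, $1/2^{t+1}$, and $\oh{s_t}$. With that, the square $\mW_1,\mW_2$ layout from Lemma~\ref{lem:rnn_trans_comp} carries over directly.
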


\begin{proof}
Proof is very similar to proof of lemma \ref{lem:rnn_trans_comp}.
\end{proof}

\section{Details of Experiments}\label{sec:aexp}
In this section, we describe the specifics of our experimental setup. This includes details about the dataset, models, setup and some sample outputs.
\subsection{Impact of Residual Connections}
The models under consideration are the vanilla Transformer, the one without decoder-encoder residual connection and the one without decoder-decoder residual connection. 
For the synthetic tasks, we implement a single layer encoder-decoder network with only a single attention head in each block. Our implementation of the Transformer is adapted from the implementation of \cite{rush-2018-annotated}. Table \ref{tab:sample_copy} provides some illustrative sample outputs of the models for the copy task.


\begin{table}[th]
	\scriptsize{\centering
		\begin{tabular}{m{6em}m{22em}}
			\toprule
			\textsc{Source \& reference} & -- there was no problem at all says douglas ford chief executive officer of the futures exchange \\
			\midrule
			{\textsc{Directional Transformer}} & -- there was no problem at all says douglas ford chief executive officer of the futures exchange\\
			{\textsc{Vanilla Transformer}} & -- there was no problem at all says douglas ford chief executive officer \\
			\bottomrule
		\end{tabular}
		\caption{\label{tab:sample_copy}Sample outputs by the models on the copy task on length 16. With absolute positional encodings the model overfits on terminal symbol at position 13 and generates sequence of length 12.}
	}
\end{table}

For the machine translation task, we use OpenNMT \cite{DBLP:journals/corr/KleinKDSR17} for our implementation.
For preprocessing the German-English dataset we used the \href{https://github.com/pytorch/fairseq/blob/e734b0fa58fcf02ded15c236289b3bd61c4cffdf/data/prepare-iwslt14.sh}{script} from fairseq. The dataset contains about 153k training sentences, 7k development sentences and 7k test sentences. The hyperparameters to train the vanilla Transformer were obtained from fairseq's \href{https://github.com/pytorch/fairseq/tree/master/examples/translation}{guidelines}. We tuned the parameters on the validation set for the two baseline model. To preprocess the English-Vietnamese dataset, we follow \citet{luong2015stanford}. The dataset contains about 133k training sentences. We use the tst2012 dataset containing 1.5k sentences for validation and tst2013 containing 1.3k sentences as test set. We use noam optimizer in all our experiments. While tuning the network, we vary the number of layer from 1 to 4, the learning rate, the number of heads, the warmup steps, embedding size and feedforward embedding size.

\subsection{Masking and Encodings}

Our implementation for directional transformer is based on \cite{yang2019assessing} but we use only unidirectional masking as opposed to bidirectional used in their setup. While tuning the models, we vary the layers from 1 to 4, the learning rate, warmup steps and the number of heads.

\end{document}